\documentclass[]{bytedance_seed}



\usepackage[toc,page,header]{appendix}

\usepackage{algorithm}
\usepackage[noend]{algpseudocode}
\usepackage[utf8]{inputenc} %
\usepackage[T1]{fontenc}    %
\usepackage{hyperref}       %
\usepackage{url}            %
\usepackage{booktabs}       %
\usepackage{nicefrac}       %
\usepackage{microtype}      %
\usepackage{xcolor}         %
\usepackage{subcaption}
\usepackage{wrapfig}
\usepackage{listings}
\usepackage{caption}
\usepackage{pifont}
\usepackage{listings}

\usepackage{natbib}
\usepackage{graphicx}
\usepackage{xcolor}
\usepackage{multicol}
\usepackage{multirow}
\usepackage{hhline}
\usepackage{xspace}
\usepackage{amsthm}
\usepackage{amsmath} 
\usepackage{amsfonts} 
\usepackage{amssymb} 
\usepackage{pifont}
\usepackage{thmtools}
\usepackage{thm-restate}
\usepackage{bm}
\usepackage{booktabs}
\usepackage{makecell}
\usepackage[table]{xcolor}
\usepackage{enumitem}
\newlist{noindentitemize}{itemize}{1}
\setlist[noindentitemize,1]{label=\textbullet, leftmargin=*}

\newtheorem{theorem}{Theorem}
\newtheorem{lemma}[theorem]{Lemma}

\newtheorem{remark}[theorem]{Remark}

\theoremstyle{definition}

\newcommand{\EE}{\mathbb{E}}

\newcommand{\mR}{{\mathbb{R}}}

\DeclareMathOperator*{\argmin}{arg\,min}

\newcommand{\cA}{\mathcal{A}}

\newcommand{\cD}{\mathcal{D}}

\newcommand{\cI}{\mathcal{I}}

\newcommand{\tpi}{\tilde{\pi}}
\definecolor{highlightcolor}{RGB}{224, 237, 235}
\definecolor{mygreen}{rgb}{0.1,0.5,0.1}
\definecolor{myred}{rgb}{0.7,0.1,0.1}

\newcommand{\cmark}{\ding{51}}
\newcommand{\xmark}{\ding{55}}


\usepackage{minitoc}


\title{Risk-Sensitive RL for Alleviating Exploration Dilemmas in Large Language Models}

\author[1]{Yuhua Jiang*}
\author[2]{Jiawei Huang*}
\author[3]{Yufeng Yuan}
\author[3]{Xin Mao}
\author[3]{Yu Yue}
\author[1]{Qianchuan Zhao}
\author[3]{Lin Yan}

\affiliation[1]{Tsinghua University}
\affiliation[2]{ETH Zurich}
\affiliation[3]{ByteDance Seed}
\contribution[*]{Work done at ByteDance Seed}


\abstract{
Reinforcement Learning with Verifiable Rewards (RLVR) has proven effective for enhancing Large Language Models (LLMs) on complex reasoning tasks. However, existing methods suffer from an \emph{exploration dilemma}: the sharply peaked initial policies of pre-trained LLMs confine standard RL algorithms to a narrow set of solutions, boosting single-solution accuracy (pass@1) but suppressing solution diversity and multi-solution performance (pass@k). As a result, RLVR often distills existing capabilities rather than discovering new reasoning strategies.  To overcome this, we introduce a \emph{Risk-Sensitive Reinforcement Learning} framework. Our approach employs a risk-seeking objective that interpolates between mean and maximum rewards, leading to a novel algorithm, Risk-Sensitive GRPO (RS-GRPO), which drives deeper exploration by amplifying learning from challenging prompts. Remarkably, RS-GRPO is simple to implement, requiring only minor code modifications. On six mathematical reasoning benchmarks and with five different LLMs, RS-GRPO consistently improves pass@k performance while maintaining or enhancing pass@1 accuracy.
}

\date{\today}
\correspondence{\email{jiangyh22@mails.tsinghua.edu.cn}}

\checkdata[Project Page]{\url{https://github.com/Jackory/RS-GRPO}}

\begin{document}
\maketitle


\begin{figure}[ht]
    \centering
    \includegraphics[width=0.8\textwidth]{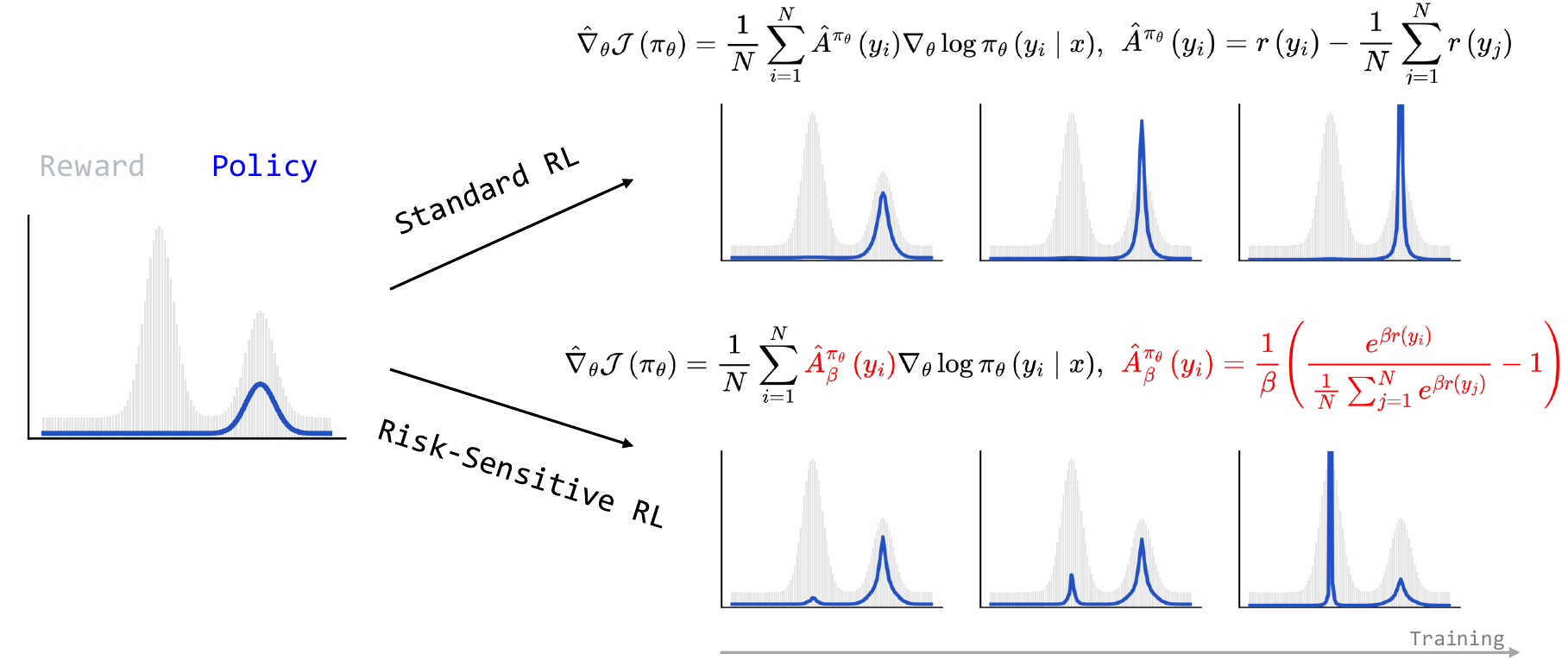}
    \caption{Illustration of the Risk-Sensitive RL vs Standard RL.}
    \label{fig:framework}
\end{figure}
\vspace{-17pt}
\section{Introduction}

Reinforcement learning (RL) with verifiable rewards has recently emerged as a highly effective paradigm for enhancing large language models (LLMs) in complex reasoning domains, enabling models to achieve superhuman performance~\citep{jaech2024openai,guo2025deepseek,team2025kimi,comanici2025gemini,seed2025seed}. 
However, a growing body of evidence reveals a critical failure mode in this approach, which we term the \textbf{exploration dilemma}: 
while current RL methods improve average accuracy (pass@1), they often achieve this by simply sharpening the policy distribution around a limited number of homogeneous solutions. This concentration of probability mass leads to a collapse in solution diversity, causing performance on the more general pass@k metric to stagnate or even degrade compared to the base model~\citep{yue2025does, wu2025invisible,he2025rewarding, liu2025prorl,shah2025rethinkingreflection}. Rather than discovering genuinely novel reasoning strategies, existing methods merely reinforce pre-trained biases and fail to expand the policy's capability frontier, posing a significant bottleneck to progress.

We argue this dilemma arises from a fundamental mismatch between the optimization landscape of LLMs and the dynamics of standard RL algorithms. 
In contrast to traditional RL settings (e.g., game playing~\citep{mnih2015human,silver2017mastering}) where training starts from a randomly initialized policy, LLMs begin with a highly specialized policy distribution that is already sharply peaked around certain solutions. If those initial peaks are not supported in the regions that yield optimal rewards, standard RL optimizers face a significant challenge: they struggle to escape the gravitational pull of the pretrained model's biases and tend to converge to a nearby, but often suboptimal mode~\citep{wu2025invisible,he2025rewarding}. This prevents the discovery of more diverse and powerful reasoning paths.

To address this limitation, we introduce a \textbf{Risk-Sensitive RL} framework designed to enhance exploration in LLM training, enabling policies to escape local optima induced by the initial bias. Our core idea is to replace the standard risk-neutral objective, which optimizes the mean of the reward distribution, with a risk-seeking one that instead interpolates smoothly between the mean and the maximum reward. By employing an exponential utility function, we derive a new formulation of policy gradient with a corresponding risk-sensitive advantage. This advantage function dynamically re-weights optimization, placing greater emphasis on hard prompts where the model performs poorly, thereby driving the policy to explore under-explored regions of the solution space.

Our approach is instantiated as a simple yet powerful algorithm Risk-Sensitive GRPO (RS-GRPO), which can be implemented as a drop-in replacement for the advantage calculation in standard RL for LLM pipelines. Through extensive experiments on six mathematical reasoning benchmarks with a diverse set of six LLMs, we demonstrate that RS-GRPO consistently and significantly improves pass@k performance over both the base models and the standard GRPO baseline. Crucially, RS-GRPO achieves this while maintaining or even improving pass@1 accuracy, striking a more effective balance than prior methods. \textbf{Our main contributions are summarized as follows}:
\begin{itemize}[leftmargin=*]
    \item We introduce a risk-sensitive RL framework to address the exploration dilemma in LLM fine-tuning and instantiate it as a simple yet powerful algorithm, Risk-Sensitive GRPO (RS-GRPO). (Section~\ref{sec:method}.)
    \item  We provide theoretical and empirical evidence that standard RL often fails to reach the global optimum when the initial policy is sharply peaked and far from optimal, whereas our risk-sensitive formulation avoids this pitfall. (Section~\ref{sec:why_rsrl_matters}.)
    \item We demonstrate through large-scale experiments on mathematical reasoning that RS-GRPO significantly improves pass@k performance while maintaining or even enhancing pass@1 accuracy, achieving a superior trade-off compared to existing methods. (Section~\ref{sec:experiments}.)
\end{itemize}
\section{Related Work}

\textbf{RL Exploration}~
Exploration remains a central challenge in reinforcement learning (RL), but its nature differs significantly between traditional applications and LLMs. In domains like game-playing, where policies are often trained from random initializations, broad exploration is essential and often encouraged by intrinsic motivation based on state novelty~\citep{oudeyer2007intrinsic,bellemare2016unifying,pathak2017icm,burda2018explorationrnd,henaff2022exploratione3b,yang2024exploration,jiang2025etd}. While some have adapted intrinsic motivation to LLMs~\citep{bai2025online,gao2025navigate}, they often introduce auxiliary networks, complicating training and scaling.

The most direct method to encourage exploration in LLMs is to maximize policy entropy as an auxiliary objective, but its effectiveness can be limited~\citep{cui2025entropy,Passk_Training}, spurring research into alternative directions. 
Some approaches focus on enhancing the reasoning process through self-reflection~\citep{jiang2025pag,kumar2024training,ma2025s2r,yeo2025demystifying}, while others investigate policy entropy dynamics to prevent mode collapse~\citep{yu2025dapo,cui2025entropy,cheng2025reasoning}. Orthogonal to these methods, our work contributes to a line of research focused on directly optimizing for inference-time objectives. This can be viewed as a form of risk-sensitive learning, where early efforts on Best-of-N (BoN) alignment~\citep{gui2024bonbon,amini2025variational,chow2025inferenceaware, balashankar2025infalign} have evolved into policy gradient methods for pass@k optimization. Notable developments include various policy gradient formulations for pass@k~\citep{tang2025optimizing, walder2025pass,mahdavi2025beyond,Passk_Training}. 
As shown in Table~\ref{tab:related_works} and detailed in Appendix~\ref{sec:appendix_baseline}, our risk-sensitive framework offers two key advantages over these pass@k optimization methods. 
First, our formulation naturally handles continuous rewards, whereas prior methods are often restricted to binary signals~\citep{Passk_Training,mahdavi2025beyond}. Second, our method yields a denser advantage signal. In most of pass@k methods, the optimization weight vanishes once prompt accuracy surpasses a given threshold (e.g., $1 - \frac{k}{N}$), which can hinder Pass@1 improvement. Our risk-sensitive formulation, by contrast, sustains a non-zero gradient even for high-accuracy prompts, thereby facilitating a more effective trade-off between Pass@k and Pass@1 performance. Detailed comparsion can found in Appendix~\ref{sec:appendix_baseline}.

\begin{table}[t]
\centering
\caption{\textbf{Comparison of Pass@k optimization methods with Risk-Sensitive RL.}}
\label{tab:related_works}
\begin{tabular}{l c c c}
\toprule
\textbf{Methods} &
\thead{Binary Rewards} & 
\thead{Continuous Rewards} & 
\thead{Dense Signal} \\
\midrule
\citet{tang2025optimizing}
& \cmark & \cmark  & \xmark \\ 
\citet{walder2025pass}
&  \cmark &  \cmark & \xmark \\ 
\citet{mahdavi2025beyond}
&  \cmark &  \xmark & \xmark \\ 
\citet{Passk_Training} 
&  \cmark &  \xmark & \xmark  \\ 

\midrule
Risk-Sensitive (ours)
& \cmark &  \cmark & \cmark  \\ 
\bottomrule
\end{tabular}
\end{table}

\textbf{Risk-Sensitive RL}~
Risk-sensitive RL~\citep{howard1972,HEGER1994105,neuneier1998risk,mihatsch2002risk} aims to model and manage the risks associated with decision-making under uncertainty, moving beyond the standard expectation-based objective.
While early work focused on risk-averse strategies for safety-critical domains such as financial trading~\citep{filos2019reinforcement}, energy storage~\citep{liu2024risk}, and robotics~\citep{nass2019entropic,noorani2022risk,shi2024robust}, the advent of distributional RL~\citep{bellemare2017distributional} has enabled more nuanced approaches. This paradigm facilitates not only risk-averse but also risk-seeking behaviors, which have been shown to promote exploration in domains like game playing~\citep{jiang2024learning,ma2025dsac,mavrin2019distributional}. Our work posits that risk-seeking optimization is critical for escaping the sharply peaked initial policy distribution of pre-trained models and enabling broader exploration.

\section{Background}

We formulate language generation as a reinforcement learning (RL) problem. A language model acts as a policy $\pi_\theta$, which generates a response $y$ to a prompt $x$ with probability $\pi_\theta(y|x)$. The quality of each response is measured by a reward function $r(x,y)$. The standard objective is to maximize expected reward:
\begin{equation}
\mathcal{J}(\pi_\theta) = \mathbb{E}_{x \sim \mathcal{D}, y \sim \pi_\theta(\cdot|x)} [r(x,y)].\label{eq:standard_obj}
\end{equation}
This objective is typically optimized via policy gradient, as stated by:
\begin{equation}
{\nabla}_\theta \mathcal{J}(\pi_\theta) = \EE_{s\sim\cD,y\sim\pi_\theta(\cdot|s)}[A^{\pi_\theta}(y) \nabla_\theta \log \pi_\theta(y|x)],\label{eq:standard_pg}
\end{equation}
where $A^{\pi_\theta}$ denotes the advantage function.
Empirically, for each prompt $x$, we sample $N$ responses $\{y_i\}_{i=1}^N$ from $\pi_{\theta}(\cdot|x)$ and construct stochastic gradient estimates:
\begin{equation}
\hat{\nabla}_\theta \mathcal{J}(\pi_\theta) = \frac{1}{N} \sum_{i=1}^N \hat{A}^{\pi_\theta}(y_i) \nabla_\theta \log \pi_\theta(y_i|x),
\end{equation}
where $\hat{A}^{\pi_\theta}(y_i) = r(y_i) - \frac{1}{N}\sum_{j=1}^N r(y_j)$, with $y_j\sim \pi_\theta(\cdot|x)$, is the advantage estimate~\footnote{We omit the standard deviation normalization used in the original GRPO~\citep{shao2024deepseekmath} algorithm, which, as noted by DrGRPO~\citep{liu2025understanding}, introduces bias.}
For clarity, we omit terms common in RLHF, such as regularization and importance sampling, and drop the explicit dependency on $x$ when unambiguous.

\textbf{Pass@k.}~ Pass@k~\citep{chen2021evaluating,kulal2019spoc} estimates the probability that at least one of $k$ generated responses is correct. It serves as a key inference-time objective, reflecting exploration ability and approximating best-of-$k$ under a reliable reward model:
\begin{equation}
\text{Pass@k} = \mathbb{E}_{\{y_i\}_{i=1}^k \sim \pi_\theta} \left[
\max \big(r(y_1), \dots, r(y_k)\big)
\right].
\end{equation}

\section{Risk-Sensitive Reinforcement Learning}\label{sec:method}

\textbf{Desiderata.}~
Standard average reward optimization is often insufficient for tasks where exploration is critical, particularly when the initial policy distribution is sharply peaked. To this end, we design a new objective to promote exploration, for which we establish two primary desiderata. First, it should value all high-reward outcomes, not just the most probable one, thereby moving beyond simple mean-reward maximization and toward a maximum-reward-seeking objective. Second, the objective should provide a flexible and principled mechanism to interpolate between optimizing for the mean reward and the maximum reward, balancing exploration and exploitation. We find that \emph{risk-sensitive RL} provides a natural framework for designing such an objective.

\textbf{Objective.}~
To meet these desiderata, we employ the risk-sensitive objective derived from exponential utility~\citep{howard1972}. This objective provides a principled way to control the trade-off between exploration and exploitation. For a given policy $\pi_\theta$ and prompt $x$, the risk-sensitive objective is defined as:
\begin{equation}
    \mathcal{J}_{\text{RS}}(\pi_\theta) 
    = \mathbb{E}_{x \sim \mathcal{D}} \left[
    \frac{1}{\beta} \log \mathbb{E}_{y \sim \pi_\theta(\cdot|x)}
    \!\left[ e^{\beta r(y)} \right]
    \right],
    \label{eq:rs_objective}
\end{equation}
where the hyperparameter $\beta \in \mathbb{R}$ controls the risk-sensitivity level:
\begin{noindentitemize}
    \item \emph{Risk-Neutral ($\beta \to 0$):} Recovers the standard expected reward, $\mathbb{E}[r(y)]$.
    \item \emph{Risk-Seeking ($\beta \to +\infty$):} Approaches the maximum reward, $\max_y r(y)$, encouraging exploration.
    \item \emph{Risk-Averse ($\beta \to -\infty$):} Approaches the minimum reward, $\min_y r(y)$, promoting robustness.
\end{noindentitemize}

To effectively explore the solution space, we adopt a risk-seeking objective ($\beta > 0$). As $\beta$ increases, the objective places greater weight on high-reward outcomes, smoothly interpolating from the mean to the maximum reward. As $\beta \to 0$, it recovers the standard mean-reward objective. We now derive the corresponding policy gradient.

\begin{figure}[t]
    \centering
    \includegraphics[width=0.9\textwidth]{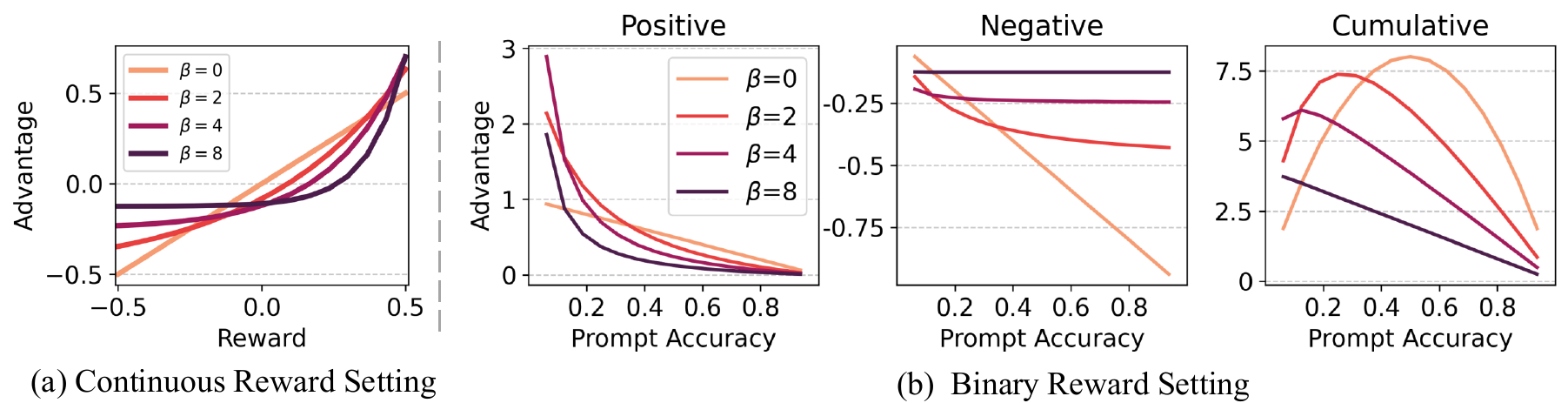}
     \caption{Analysis of the risk-sensitive advantage function with varying risk-sensitivity \(\beta\). }
      \label{fig:advantage_analysis}
  \end{figure}
\vspace{-5pt}

\subsection{Policy Gradient for the Risk-Sensitive Objective}
We first derive the risk-sensitive policy gradient in theorem below, and defer its proof to Appx.~\ref{appx:proofs}.
\begin{theorem}\label{thm:risk_sensitive_PG}
The policy gradient of the risk-sensitive objective in Eq.~\eqref{eq:rs_objective} is given by
\begin{equation}
\nabla_\theta \mathcal{J}_{\text{RS}}(\pi_\theta) 
= \mathbb{E}_{x \sim \mathcal{D},\, y \sim \pi_\theta(\cdot|x)} 
\!\left[ A^{\pi_\theta}_\beta(y) \nabla_\theta \log \pi_\theta(y \mid x) \right],\label{eq:risk_sensitive_pg}
\end{equation}
where the risk-sensitive advantage $A^{\pi_\theta}_\beta$ is
\begin{equation}
    A^{\pi_\theta}_\beta(y) = \frac{1}{\beta} \left(
    \frac{ e^{\beta r(y)} }
         { \mathbb{E}_{y' \sim \pi_\theta(\cdot|x)} [ e^{\beta r(y')} ] }
    - 1 \right).
    \label{eq:rs_advantage}
\end{equation}
\end{theorem}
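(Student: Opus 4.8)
The plan is to work prompt-by-prompt and reduce the gradient to a standard score-function computation. First I would fix a prompt $x$ and abbreviate the inner partition-style quantity as $Z(x) := \mathbb{E}_{y \sim \pi_\theta(\cdot|x)}[e^{\beta r(y)}]$, so that the per-prompt objective becomes $\tfrac{1}{\beta}\log Z(x)$. By linearity of the outer expectation, it suffices to differentiate this inner term and then re-apply $\mathbb{E}_{x \sim \mathcal{D}}[\cdot]$ at the very end.

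Next I would apply the chain rule to obtain $\nabla_\theta \tfrac{1}{\beta}\log Z(x) = \tfrac{1}{\beta}\,\nabla_\theta Z(x)/Z(x)$, and then use the log-derivative (REINFORCE) identity $\nabla_\theta \pi_\theta(y|x) = \pi_\theta(y|x)\,\nabla_\theta \log \pi_\theta(y|x)$ to rewrite $\nabla_\theta Z(x) = \mathbb{E}_{y \sim \pi_\theta(\cdot|x)}[e^{\beta r(y)}\,\nabla_\theta \log \pi_\theta(y|x)]$. Dividing through by $Z(x)$ expresses the gradient as $\tfrac{1}{\beta}\,\mathbb{E}_{y}\big[\tfrac{e^{\beta r(y)}}{Z(x)}\,\nabla_\theta \log \pi_\theta(y|x)\big]$, which already exhibits the weighted-score structure of the claim.

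The final and conceptually key step is to match this expression to the advantage form in Eq.~\eqref{eq:rs_advantage}. Writing $A^{\pi_\theta}_\beta(y) = \tfrac{1}{\beta}\big(\tfrac{e^{\beta r(y)}}{Z(x)} - 1\big)$, I would substitute $\tfrac{1}{\beta}\tfrac{e^{\beta r(y)}}{Z(x)} = A^{\pi_\theta}_\beta(y) + \tfrac{1}{\beta}$ into the gradient. This splits it into the desired advantage-weighted term plus a residual $\tfrac{1}{\beta}\,\mathbb{E}_{y}[\nabla_\theta \log \pi_\theta(y|x)]$. The crux is recognizing that this residual is exactly the classical baseline term, which vanishes by the score-function identity $\mathbb{E}_{y \sim \pi_\theta}[\nabla_\theta \log \pi_\theta(y|x)] = \nabla_\theta \sum_y \pi_\theta(y|x) = \nabla_\theta 1 = 0$; thus the constant $-1$ inside the advantage acts as a control variate that leaves the gradient unchanged while centering the weights around $1$. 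Taking the outer expectation over $x \sim \mathcal{D}$ then recovers Eq.~\eqref{eq:risk_sensitive_pg}. I do not expect a substantial obstacle here, since the computation is essentially algebraic; the only subtlety worth stating carefully is the interchange of differentiation and expectation in the step $\nabla_\theta Z(x) = \mathbb{E}_{y}[\nabla_\theta(\cdot)]$, which is immediate when the response space is finite (as for token sequences of bounded length) and otherwise follows from a routine dominated-convergence argument under standard regularity of $\pi_\theta$.
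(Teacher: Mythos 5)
Your proposal is correct and follows essentially the same route as the paper's proof: chain rule on $\tfrac{1}{\beta}\log Z(x)$, the log-derivative trick to rewrite $\nabla_\theta Z(x)$, and then centering by the baseline $\tfrac{1}{\beta}$, which leaves the gradient unchanged because $\mathbb{E}_{y\sim\pi_\theta}[\nabla_\theta\log\pi_\theta(y|x)]=0$. You simply spell out the vanishing of the baseline term and the differentiation-expectation interchange more explicitly than the paper does.
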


\textbf{Practical Implementation}~
In practice, we approximate the gradient for each prompt $x$ using $N$ samples $\{y_i\}_{i=1}^N \sim \pi_\theta(\cdot|x)$, yielding $\hat{\nabla}_\theta \mathcal{J}_{x}(\pi_\theta) = \frac{1}{N} \sum_{i=1}^N \hat{A}^{\pi_\theta}_\beta(y_i) \nabla_\theta \log \pi_\theta(y_i \mid x)$, where the empirical advantage is defined as
\begin{equation}
\hat{A}^{\pi_\theta}_\beta(y_i) = \frac{1}{\beta} \left(
\frac{ e^{\beta r(y_i)} }
     { \tfrac{1}{N} \sum_{j=1}^N e^{\beta r(y_j)} }
- 1 \right).
\label{eq:empirical_advantage}
\end{equation}
A key feature of this formulation is that it only alters the advantage computation while leaving the policy gradient structure intact. This allows our risk-sensitive advantage to serve as a drop-in replacement in existing GRPO-based RL algorithms \citep{shao2024deepseekmath,yu2025dapo,liu2025understanding}, requiring only minimal code modifications.

\subsection{Analysis of the Risk-Sensitive Advantage Function}
The effectiveness of our method stems from the behavior of the risk-sensitive advantage function. To analyze its properties, we examine how the distribution of advantage values changes with the risk-sensitivity level, $\beta$, as illustrated in Figure~\ref{fig:advantage_analysis}.

\textbf{Continuous Reward Setting.}~
 As shown in Figure~\ref{fig:advantage_analysis}(a), in a continuous reward space, the standard policy gradient ($\beta \to 0$) yields an advantage that is linear with the reward. As $\beta$ increases, the function sharpens into a step-like curve. This transformation amplifies the advantage for high-reward samples and suppresses it for low-reward ones.

\textbf{Binary Reward Setting.}~
Figure~\ref{fig:advantage_analysis}(b) illustrates the advantage dynamics in a binary reward setting (common in RLVR) as a function of prompt accuracy—the fraction of correct responses out of 16 samples. As $\beta$ increases, the advantage function increasingly prioritizes correct responses on hard prompts (low accuracy) while reducing the penalty for incorrect ones on easy prompts (high accuracy), as seen in the \textit{Positive} and \textit{Negative} subplots. Consequently, the \textit{Cumulative} plot shows that the total advantage magnitude per prompt (sum of absolute advantages) shifts from peaking at 50\%-accuracy prompts (for $\beta \to 0$) toward lower-accuracy ones. This demonstrates that as $\beta$ increases, risk-sensitive RL re-weights the advantage signals to prioritize harder prompts.

\textbf{Connection to other Pass@k optimization methods.}~
There exist several intriguing connections between our risk-sensitive advantage function and prior Pass@k optimization methods, including~\citep{walder2025pass,Passk_Training,mahdavi2025beyond,tang2025optimizing}. Perhaps the most apparent connection lies in the shared strategy of re-weighting advantage signals to prioritize harder prompts, thereby enhancing Pass@k performance. Our method, however, is distinguished by two key features. First, its applicability extends to both binary and continuous reward structures, a versatility not shared by many Pass@k methods that are constrained to binary reward. Second, our method yields a denser advantage signal. In several alternative approaches, the optimization weight vanishes once prompt accuracy surpasses a given threshold (e.g., $1 - \frac{k}{N}$), which can hinder Pass@1 improvement. Our risk-sensitive formulation, by contrast, sustains a non-zero gradient even for high-accuracy prompts, thereby facilitating a more effective trade-off between Pass@k and Pass@1 performance. Detailed comparsion can found in Appendix~\ref{sec:appendix_baseline}.

\vspace{-5pt}
\section{Why Risk-Sensitive RL is Better}
\vspace{-5pt}
\label{sec:why_rsrl_matters}
Fine-tuning LLMs with RL often starts from a sharply peaked pretrained policy. Standard mean-reward optimization methods can be trapped in local optima corresponding to high-probability regions of this initial distribution, failing to discover global optima in low-probability areas. The risk-sensitive approach we adopt is designed to overcome this limitation. This section provides both empirical and theoretical support for this claim.

\begin{figure}[t]
    \centering
    \includegraphics[width=0.9\textwidth]{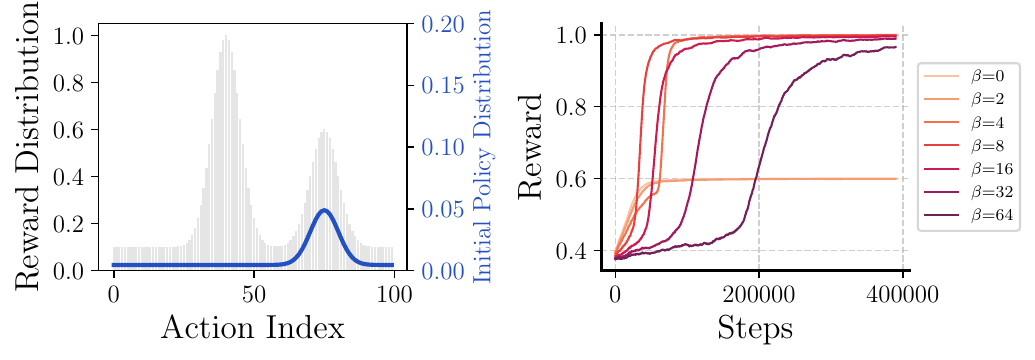}
    \caption{\textbf{A bandit experiment demonstrating that risk-sensitive RL can escape a local optimum that traps its standard RL counterpart.} \textbf{Left:} The reward landscape shows a global optimum and a distinct local optimum where the policy is initialized. \textbf{Right:} A standard risk-neutral policy ($\beta=0$) is trapped locally, while risk-sensitive policies ($\beta \geq 4$) converge to the global optimum.}
    \label{fig:bandit_experiments}
\end{figure}

\subsection{Empirical Perspective}\label{sec:empirical_perspective}
To illustrate the exploration dilemma, we design a 100-armed bandit problem where each action yields a deterministic reward. Figure~\ref{fig:bandit_experiments}a visualizes the reward landscape, which features two distinct peaks: a global optimum (reward 1.0) and a prominent suboptimal one (reward 0.6). We deliberately initialize the policy as a sharp distribution centered on the suboptimal arm. This setup is analogous to the LLM fine-tuning challenge, where pretrained models may exhibit a bias toward solutions that are good but  may not globally optimal. 

We employ our risk-sensitive policy gradient algorithm to train policies with varying risk-sensitivity parameters ($\beta \ge 0$). The learning curves in Figure~\ref{fig:bandit_experiments} reveal a evident divergence in performance. The standard risk-neutral policy ($\beta=0$) and its low-risk-sensitivity counterparts (e.g., $\beta < 4$) rapidly converge to the suboptimal reward of 0.6, becoming trapped in the local optimum by exploiting the initial policy's high-probability region. By contrast, policies with sufficient risk-seeking behavior ($\beta \geq 4$) successfully escape this trap and converge to the globally optimal reward of 1.0. The evolution of the policy distribution during training is illustrated in Figure~\ref{fig:framework}.

\subsection{Theoretical Perspective}\label{sec:theoretical_perspective}
In this section, We examine the one-step policy update in a simple multi-armed bandit setting, demonstrating the fundamental advantage of the risk-sensitive objective from Eq.~\eqref{eq:rs_objective}. 
For clarity, we assume the uniqueness of the optimal action. Our results can be generalized to settings with multiple optimal actions, which we defer to Appendix~\ref{appx:proofs}.

\textbf{Setup and Notation}~
We study the $K$-armed bandit problem with action space $\cA:=\{a_1,...,a_K\}$ and denote the unique optimal arm by $a^* \in \cA$.
We assume a bounded reward function $r:\cA\rightarrow[0,1]$, and consider the softmax policy $\pi_\theta$ parameterized by $\theta \in \mR^{K}$:
$
\forall i \in [K],\quad \pi_\theta(a_i) = {e^{\theta_i}}/{\sum_{j=1}^K e^{\theta_j}}.
$

Let's compare a single policy update step. Given a starting policy $\pi_\theta$, we denote the updated parameters after one step of standard policy gradient (Eq.~\eqref{eq:standard_pg}) as $\tilde{\theta}$, and after one step of risk-sensitive policy gradient (Eq.~\eqref{eq:risk_sensitive_pg}) as $\tilde{\theta}^\beta$. The learning rate is assumed to be the same and omitted for simplicity.

Our first result highlights a critical flaw in the standard policy gradient: it can decrease the probability of the optimal action. This happens if a suboptimal action exists that is nonetheless better than the average, which can misdirect the update.
\begin{lemma}\label{lem:informal_decrease_opt_act_prob}
    If there is an action $a_i$ with reward $r(a_i)$ such that $r(a^*) > r(a_i) > \min_j r(a_j)$, then there exist policy parameters $\theta$ for which the standard policy gradient update decreases the probability of the optimal action, i.e., $\pi_{\tilde{\theta}}(a^*) < \pi_{\theta}(a^*)$.
\end{lemma}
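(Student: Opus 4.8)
The plan is to reduce the statement to an explicit computation of the softmax policy gradient and then exhibit a concrete configuration of arms and parameters in which the softmax normalization effect dominates. First I would specialize the standard update in Eq.~\eqref{eq:standard_pg} to the softmax parameterization. Writing $p_k := \pi_\theta(a_k)$ and $\bar{r} := \sum_{j} p_j\, r(a_j)$ for the mean reward, the softmax score is $\partial_{\theta_k}\log\pi_\theta(a_i) = \mathbf{1}[i=k] - p_k$, and since the advantage $A^{\pi_\theta}(a_i) = r(a_i) - \bar{r}$ has zero mean under $\pi_\theta$, the per-coordinate gradient collapses to $\partial_{\theta_k}\mathcal{J}(\pi_\theta) = p_k\,(r(a_k) - \bar{r})$. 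Hence one gradient-ascent step with learning rate $\eta$ gives $\tilde{\theta}_k = \theta_k + \eta\,p_k\,(r(a_k)-\bar{r})$.

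Next I would track how this update changes $\pi_\theta(a^*)$. Because $a^*$ is optimal we always have $r(a^*) > \bar{r}$, so its logit increases, $\tilde{\theta}_{*} > \theta_{*}$; the subtlety the lemma exploits is that raising the logit $\theta_{*}$ does not guarantee raising the normalized probability, since every above-average arm simultaneously inflates the softmax denominator. Concretely, $\pi_{\tilde{\theta}}(a^*) < \pi_\theta(a^*)$ holds exactly when the denominator grows faster than the numerator. I would make this tractable either by differentiating $\eta \mapsto \pi_{\tilde{\theta}(\eta)}(a^*)$ at $\eta = 0$, which yields the sign condition $p_*\,(r(a^*)-\bar{r}) < \sum_k p_k^2\,(r(a_k)-\bar{r})$, or, to avoid any smallness assumption on $\eta$, by passing to the clean limiting regime described below.

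Finally I would construct an explicit instance satisfying the hypothesis $r(a^*) > r(a_i) > \min_j r(a_j)$. Take three arms with $r(a^*) = 1$, $r(a_i) = \rho \in (0,1)$, and $r(a_3) = 0 = \min_j r(a_j)$, and initialize the logits so that the middle arm $a_i$ carries the bulk of the mass (say $p_i > \tfrac{1}{2}$) while the optimal arm carries only $p_* = \epsilon$. As $\epsilon \to 0$ the optimal arm's increment $\eta\,p_*(r(a^*)-\bar{r}) \to 0$, so its logit and numerator are essentially frozen, while the zero-mean identity $\sum_k \delta_k = 0$ forces the remaining two increments to balance, $\delta_3 \to -\delta_i$ with $\delta_i = p_i(r(a_i)-\bar{r}) > 0$. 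Setting $s := \eta\,p_i\,(r(a_i)-\bar{r}) > 0$, a direct comparison using $1 - e^{-s} = e^{-s}(e^{s}-1)$ shows the denominator $e^{\theta_i+s}+e^{\theta_3-s}$ strictly exceeds $e^{\theta_i}+e^{\theta_3}$ precisely when $\theta_i - \theta_3 > -s$, which holds whenever the middle arm initially outweighs the minimum arm, i.e. $p_i > p_3$. Thus the denominator strictly increases while the numerator is frozen, giving $\pi_{\tilde{\theta}}(a^*) < \pi_\theta(a^*)$ in the limit; by continuity the strict inequality survives for all sufficiently small $\epsilon > 0$ and for any fixed common learning rate, establishing the claim.

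The step I expect to be the main obstacle is the second one: isolating a clean, checkable sign condition for the decrease despite the fact that $a^*$'s own logit increases. The entire phenomenon lives in the coupling introduced by the softmax normalization, so the argument must quantitatively balance the small upward push on $\theta_*$ against the larger upward push that the above-average suboptimal arm exerts on the denominator. Engineering a configuration where this balance is unambiguous—hence the $\epsilon \to 0$ device that freezes the optimal arm and the condition $p_i > p_3$ that makes the denominator's growth strict—is the crux, while the remaining algebra is routine.
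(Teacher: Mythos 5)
Your proposal is correct and follows essentially the same route as the paper: both reduce the claim to the sign condition $\sum_j \pi_j^2 A_j - \pi_{i^*}A_{i^*} > 0$ on the softmax update (you obtain it as the first-order condition at $\eta=0$ and via an explicit three-arm limiting construction; the paper obtains it as a sufficient condition for arbitrary step size via Jensen's inequality on the normalizer), and both realize it by concentrating the initial policy on an above-average suboptimal arm. The only substantive difference is that the paper's Jensen step gives the decrease for every learning rate $\alpha>0$ from a single $\theta$, whereas your limiting argument fixes the step size before choosing $\epsilon$; since your configuration with $p_i>p_3$ also satisfies the Jensen condition, the uniform-in-$\alpha$ statement follows from it as well.
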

In contrast, our next lemma states that the risk-sensitive policy gradient guarantees an improvement for the optimal action, as long as $\beta$ is sufficiently large.
\begin{lemma}\label{lem:informal_RS_Gaurantee}
    For any policy $\pi_\theta$ and reward function $r$, there is a risk-sensitivity level $\bar{\beta}$ such that for all $\beta > \bar{\beta}$, the risk-sensitive update increases the probability of the optimal action: $\pi_{\tilde{\theta}^\beta}(a^*) > \pi_{\theta}(a^*)$.
\end{lemma}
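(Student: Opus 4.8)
The plan is to reduce the claim to a single clean sign condition on the per-coordinate gradient, and then to verify that condition holds once $\beta$ is large by a short asymptotic analysis that exploits the \emph{uniqueness} of $a^*$. First I would compute the exact gradient of $\mathcal{J}_{\text{RS}}$ under the softmax parameterization. Differentiating $\mathcal{J}_{\text{RS}}(\pi_\theta)=\tfrac1\beta\log\EE_{a\sim\pi_\theta}[e^{\beta r(a)}]$ and using the softmax identity $\partial_{\theta_i}\pi_\theta(a_k)=\pi_\theta(a_k)(\mathbf{1}[i=k]-\pi_\theta(a_i))$ together with the fact that $A^{\pi_\theta}_\beta$ has zero mean under $\pi_\theta$, all cross terms cancel and the gradient collapses to the diagonal form $\partial_{\theta_i}\mathcal{J}_{\text{RS}}(\pi_\theta)=\pi_\theta(a_i)\,A^{\pi_\theta}_\beta(a_i)=:g_i$. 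The one-step update is then $\tilde\theta^\beta_i=\theta_i+\eta g_i$ for a fixed learning rate $\eta>0$ (omitted in the statement).

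Next I would rewrite the target inequality purely in terms of the $g_i$. Writing $*$ for the index of $a^*$ and cross-multiplying the two softmax expressions, $\pi_{\tilde\theta^\beta}(a^*)>\pi_\theta(a^*)$ is equivalent to $e^{\eta g_*}>\sum_{j}\pi_\theta(a_j)e^{\eta g_j}$, which, after subtracting the $j=*$ term and using $\sum_{k\neq *}\pi_\theta(a_k)=1-\pi_\theta(a^*)$, is in turn equivalent to
\[
\sum_{k\neq *}\pi_\theta(a_k)\bigl(e^{\eta g_*}-e^{\eta g_k}\bigr)>0.
\]
The key observation is that this holds term-by-term as soon as $g_*>0>g_k$ for every $k\neq*$, since then $e^{\eta g_*}>1>e^{\eta g_k}$ for $\eta>0$; no comparison of magnitudes is required. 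This is the step that sidesteps the delicate fact that every $g_i\to0$ as $\beta\to\infty$.

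Finally I would certify the signs of the $g_i$ for large $\beta$. Because softmax with $K\ge2$ gives $\pi_\theta(a_i)\in(0,1)$, it suffices to control $\mathrm{sign}\bigl(\beta A^{\pi_\theta}_\beta(a_i)\bigr)$. Setting $\Delta_k:=r(a^*)-r(a_k)$, uniqueness of $a^*$ yields $\Delta_k>0$ for all $k\neq*$, so $\EE[e^{\beta r}]=e^{\beta r(a^*)}\bigl(\pi_\theta(a^*)+\sum_{k\neq*}\pi_\theta(a_k)e^{-\beta\Delta_k}\bigr)$. Hence $\beta A^{\pi_\theta}_\beta(a^*)=\tfrac{e^{\beta r(a^*)}}{\EE[e^{\beta r}]}-1\to \tfrac{1}{\pi_\theta(a^*)}-1>0$, while for $k\neq*$ the numerator carries an extra factor $e^{-\beta\Delta_k}\to0$, giving $\beta A^{\pi_\theta}_\beta(a_k)\to-1<0$. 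Each limit is strict and there are finitely many arms, so taking $\bar\beta$ to be the largest of the finitely many per-arm sign thresholds secures $g_*>0$ and $g_k<0$ simultaneously for all $\beta>\bar\beta$, completing the argument.

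The main obstacle I anticipate is exactly the degeneracy that all gradient coordinates vanish as $\beta\to\infty$, so a naive magnitude estimate of $\pi_{\tilde\theta^\beta}(a^*)-\pi_\theta(a^*)$ would be inconclusive; the reduction to the termwise sign condition is the crucial move that avoids it. A minor point is the excluded degenerate case $\pi_\theta(a^*)=1$ (where $\tfrac{1}{\pi_\theta(a^*)}-1=0$ and the probability cannot increase), which does not arise for finite softmax parameters with $K\ge2$. For the multi-optimal-arm generalization promised in the statement, one replaces $a^*$ by the optimal set $\cA^*$: the same computation gives the limit $\tfrac{1}{\pi_\theta(\cA^*)}-1$ with $\pi_\theta(\cA^*)=\sum_{a\in\cA^*}\pi_\theta(a)$, and the argument goes through provided at least one suboptimal arm retains positive probability.
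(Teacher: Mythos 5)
Your proof is correct and follows essentially the same route as the paper's: both write the update as $\tilde\theta^\beta_i=\theta_i+\alpha\,\pi_\theta(a_i)A^{\pi_\theta}_\beta(a_i)$ and reduce the claim to the sign condition that $A^{\pi_\theta}_\beta(a^*)>0$ while $A^{\pi_\theta}_\beta(a_k)<0$ for all suboptimal $a_k$ once $\beta$ is large. The only cosmetic difference is that the paper exhibits the explicit threshold $\bar\beta=\tfrac{1}{\Delta}\log\tfrac{1}{\pi_\theta(a^*)}$ (with $\Delta$ the gap to the second-best arm) where you argue via limits over finitely many arms; both are valid.
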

Together, Lem.~\ref{lem:informal_decrease_opt_act_prob} and Lem.~\ref{lem:informal_RS_Gaurantee} explain why increasing $\beta$ helps to escape from local optima in Fig.~\ref{fig:bandit_experiments}, and provide theoretical insights into the benefits of risk-sensitive policy gradients.

This raises a natural question: should we always use the largest possible $\beta$? The next lemma gives a negative answer: once $\beta$ exceeds a certain threshold, the policy improvement on $a^*$---while still positive---decreases as $\beta$ grows.
\begin{lemma}
    \label{lem:informal_higher_beta_lower_improvement}
    For any policy $\pi_\theta$ and reward function $r$, there is a threshold $\bar{\beta}$ such that for any $\beta_1 > \beta_2 > \bar{\beta}$, the improvement on the optimal action is smaller for the larger $\beta$: $0<\pi_{\tilde{\theta}^{\beta_1}}(a^*) - \pi_{\theta}(a^*) < \pi_{\tilde{\theta}^{\beta_2}}(a^*) - \pi_{\theta}(a^*)$.
\end{lemma}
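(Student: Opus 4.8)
The plan is to reduce the one-step improvement on $a^*$ to an explicit scalar function of $\beta$ and then analyze its derivative. Write $p_i := \pi_\theta(a_i)$ and $r_i := r(a_i)$. Since the softmax score gradient is $\nabla_{\theta_i}\log\pi_\theta(a) = \mathbb{1}[a = a_i] - p_i$ and the risk-sensitive advantage has zero mean under $\pi_\theta$ (that is, $\EE_{a\sim\pi_\theta}[A^{\pi_\theta}_\beta(a)] = 0$, directly from Eq.~\eqref{eq:rs_advantage}), the $i$-th coordinate of the gradient in Eq.~\eqref{eq:risk_sensitive_pg} collapses to $p_i A^{\pi_\theta}_\beta(a_i) = \tfrac{1}{\beta}(w_i^\beta - p_i)$, where $w_i^\beta := p_i e^{\beta r_i}/Z_\beta$ with $Z_\beta := \sum_k p_k e^{\beta r_k}$ is the exponentially tilted distribution. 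A one-step update with learning rate $\eta>0$ therefore shifts $\theta_i$ by $u_i := \tfrac{\eta}{\beta}(w_i^\beta - p_i)$; using $e^{\theta_i} = p_i\sum_k e^{\theta_k}$ the common normalizer cancels and the post-update optimal probability has the closed form $\pi_{\tilde{\theta}^\beta}(a^*) = p_* e^{u_*}/\sum_j p_j e^{u_j}$. I set $\Delta(\beta) := \pi_{\tilde{\theta}^\beta}(a^*) - p_*$; positivity $\Delta(\beta) > 0$ for large $\beta$ is exactly Lem.~\ref{lem:informal_RS_Gaurantee}, so the whole task reduces to showing $\Delta$ is strictly decreasing on $(\bar\beta, \infty)$.

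Next I would differentiate the closed form. Writing $D(\beta) := \sum_j p_j e^{u_j}$, a direct computation gives $\Delta'(\beta) = p_*\tfrac{e^{u_*}}{D}\big(u_*' - \langle u'\rangle\big)$, where $\langle u'\rangle := \sum_j (p_j e^{u_j}/D)\,u_j'$ is the average of the coordinate velocities $u_j'$ under the reweighted distribution. Because the prefactor $p_* e^{u_*}/D$ is strictly positive, $\mathrm{sign}\,\Delta'(\beta) = \mathrm{sign}(u_*' - \langle u'\rangle)$, and it remains to show this is negative for all large $\beta$. The velocities are $u_i' = \eta\big[-\tfrac{1}{\beta^2}(w_i^\beta - p_i) + \tfrac{1}{\beta}(w_i^\beta)'\big]$, and here I would use the exponential-family identity $(w_i^\beta)' = w_i^\beta(r_i - \bar r_\beta)$ with $\bar r_\beta := \sum_k w_k^\beta r_k$.

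The engine of the proof is the exponential concentration guaranteed by the unique optimum. Let $\delta_j := r_* - r_j > 0$ and $\delta := \min_{j\neq *}\delta_j > 0$. Then $w_*^\beta = 1 - O(e^{-\beta\delta})$, $w_j^\beta = O(e^{-\beta\delta_j})$, and $r_* - \bar r_\beta = \sum_{j\neq*}w_j^\beta \delta_j = O(e^{-\beta\delta})$, so each $(w_i^\beta)'$ is exponentially small and the $\tfrac{1}{\beta}(w_i^\beta)'$ terms are negligible against $\tfrac{1}{\beta^2}$. Substituting $w_i^\beta = \mathbb{1}[i=*] - p_i + O(e^{-\beta\delta})$ (and noting $u_j \to 0$, so the reweighting $p_j e^{u_j}/D \to p_j$) yields
\begin{equation*}
u_*' - \langle u'\rangle = -\frac{\eta}{\beta^2}\Big[(1-p_*) - \big(p_* - \textstyle\sum_j p_j^2\big)\Big] + o\!\left(\tfrac{1}{\beta^2}\right) = -\frac{\eta}{\beta^2}\Big[(1-p_*)^2 + \textstyle\sum_{j\neq*}p_j^2\Big] + o\!\left(\tfrac{1}{\beta^2}\right).
\end{equation*}
The bracket is strictly positive whenever $p_* < 1$, which holds automatically for any finite softmax parameter $\theta$ with $K \ge 2$; hence $u_*' - \langle u'\rangle < 0$, and therefore $\Delta'(\beta) < 0$, for all $\beta$ beyond some $\bar\beta$. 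Enlarging $\bar\beta$ if necessary so that Lem.~\ref{lem:informal_RS_Gaurantee} also applies, strict monotone decrease gives $0 < \Delta(\beta_1) < \Delta(\beta_2)$ for every $\beta_1 > \beta_2 > \bar\beta$, which is the claim.

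The step I expect to be the main obstacle is making the last display rigorous: I must convert the informal $O(e^{-\beta\delta})$ and $o(1/\beta^2)$ bookkeeping into uniform remainder bounds, controlling simultaneously the second-order Taylor remainder of the exponentials $e^{u_j}$ (each $u_j = O(1/\beta)$), the deviation of the reweighting weights $p_j e^{u_j}/D$ from $p_j$, and the exponentially small derivative terms $(w_i^\beta)'$, so that the explicit $-\tfrac{\eta}{\beta^2}\big[(1-p_*)^2 + \sum_{j\neq *}p_j^2\big]$ term provably dominates the sign. This is careful asymptotics rather than a conceptual difficulty, but it is where all the care is needed; the degenerate cases (constant $r$, or $p_* = 1$) are excluded by the unique-optimum assumption together with finiteness of $\theta$.
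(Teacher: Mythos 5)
Your proposal is correct, and it reaches the conclusion by a genuinely different route than the paper. The paper's proof never writes down the post-update probability as an explicit function of $\beta$; instead it differentiates each coordinate of the risk-sensitive advantage, $A_{\beta,i}$, with respect to $\beta$ and shows that for all $\beta$ beyond a threshold the optimal-action advantages are positive but decreasing while the suboptimal ones are negative but increasing, then invokes the monotone structure of the softmax update $\tilde{\pi}_i = \pi_i e^{\alpha \pi_i A_{\beta,i}}/\sum_j \pi_j e^{\alpha \pi_j A_{\beta,j}}$ to conclude that $\tilde{\pi}^\beta_{\cI^*}$ decreases in $\beta$. You instead differentiate the target quantity itself: after the (correct) reduction $\pi_\theta(a_i)A^{\pi_\theta}_\beta(a_i) = \tfrac{1}{\beta}(w_i^\beta - p_i)$ via the zero-mean property of the advantage, you obtain the closed form $\pi_{\tilde\theta^\beta}(a^*) = p_* e^{u_*}/\sum_j p_j e^{u_j}$, compute $\mathrm{sign}\,\Delta'(\beta) = \mathrm{sign}(u_*' - \langle u'\rangle)$, and extract the leading term by exponential concentration of the tilted distribution $w^\beta$ onto the unique optimum. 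The trade-off: the paper's argument yields monotonicity statements about the advantages themselves for \emph{all} $\beta$ above its threshold and extends directly to multiple optimal actions (its formal version handles $\cI^*$ with $|\cI^*|>1$), whereas your argument leans on the uniqueness of $a^*$ for the concentration $w_*^\beta \to 1$ and would need a small adaptation in the degenerate case; in exchange, you get something the paper does not provide, namely an explicit asymptotic rate $\Delta'(\beta) \sim -\tfrac{\eta p_*}{\beta^2}\bigl[(1-p_*)^2 + \sum_{j\neq *} p_j^2\bigr]$, which quantifies how quickly the one-step improvement degrades with $\beta$. Your own diagnosis of the remaining work is accurate: the $O(e^{-\beta\delta})$ and $o(1/\beta^2)$ remainders (tilted-weight deviations, the $\tfrac{\eta}{\beta}(w_i^\beta)'$ terms, and the $O(1/\beta)$ perturbation of the reweighting $p_j e^{u_j}/D$ multiplying $u_j' = O(1/\beta^2)$) all contribute at order $o(1/\beta^2)$ and can be bounded uniformly, so the sign of the explicit leading term survives; this is bookkeeping, not a gap.
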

This result aligns with the convergence speed shown on the right of Fig.~\ref{fig:bandit_experiments}, where increasing $\beta$ eventually slows down the convergence.
This provides crucial guidance for tuning $\beta$ in practice: it should be large to enhance exploration, but not so large that it hinders convergence speed.


\vspace{-5pt}
\section{Experiments}
\vspace{-5pt}
\label{sec:experiments}

\begin{figure}[th!]
    \centering
    \begin{subfigure}{0.99\textwidth}
        \includegraphics[width=\textwidth]{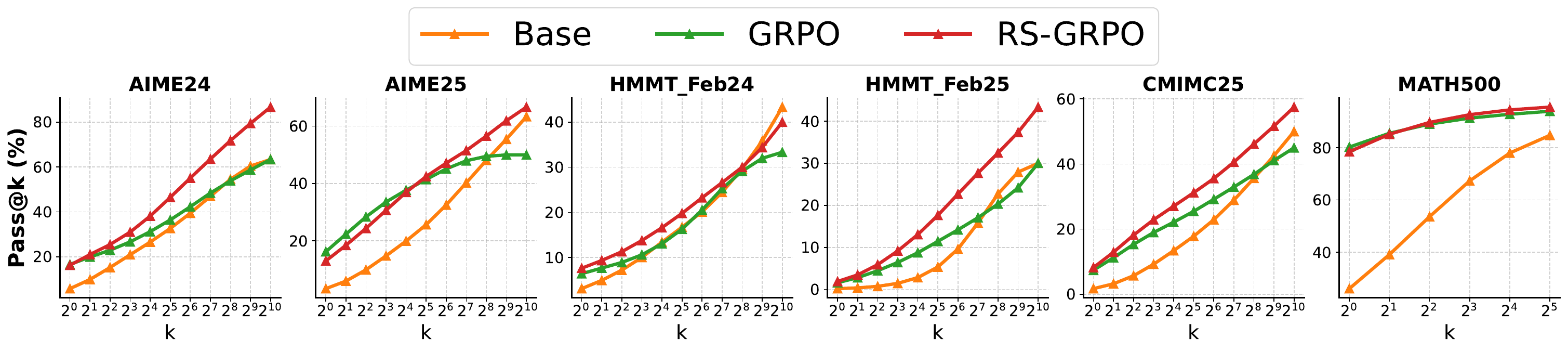}
        \label{fig:qwen1p5math_passk}
    \end{subfigure} \\
    \vspace{-20pt}
    \begin{subfigure}{0.99\textwidth}
        \includegraphics[width=\textwidth]{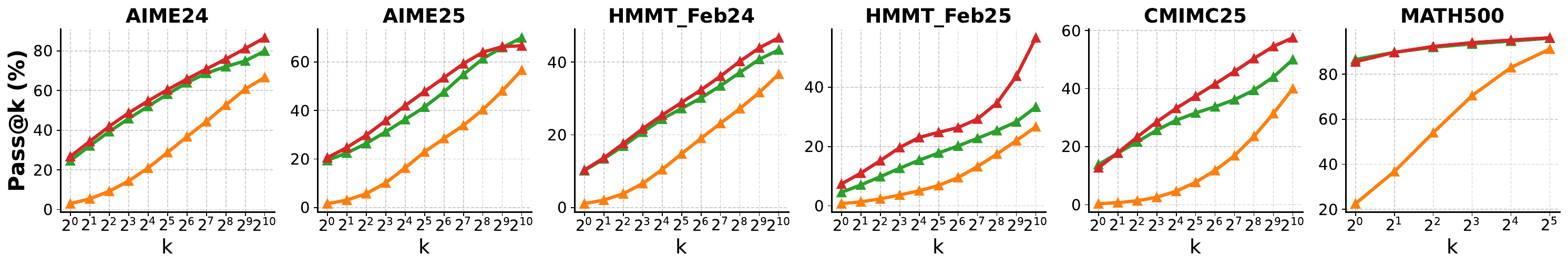}
        \label{fig:qwen7bmath_passk}
    \end{subfigure} \\
    \vspace{-20pt}
    \begin{subfigure}{0.99\textwidth}
        \includegraphics[width=\textwidth]{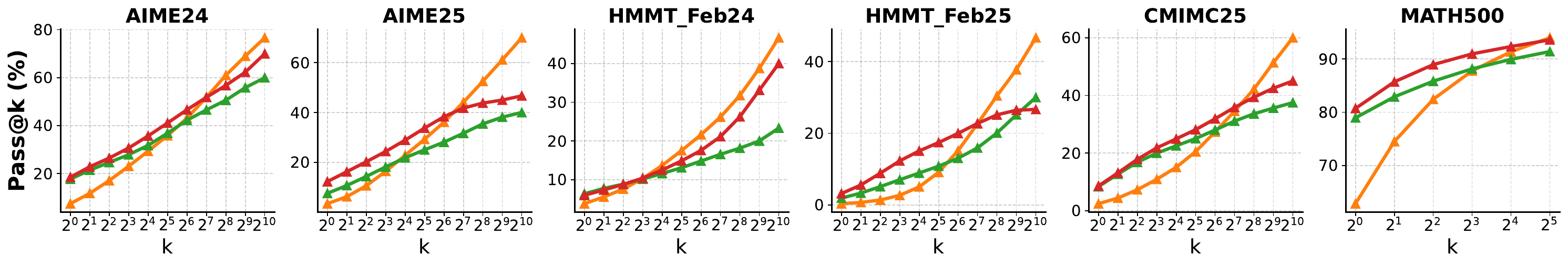}
        \label{fig:qwen7b_passk}
    \end{subfigure} \\
    \vspace{-20pt}
    \begin{subfigure}{0.99\textwidth}
        \includegraphics[width=\textwidth]{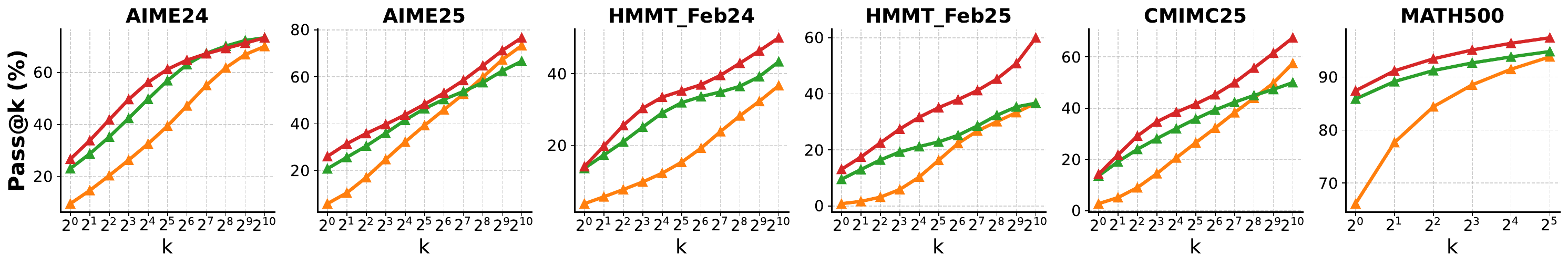}
        \label{fig:qwen3-4b_passk}
    \end{subfigure} \\
    \vspace{-20pt}
    \begin{subfigure}{0.99\textwidth}
        \includegraphics[width=\textwidth]{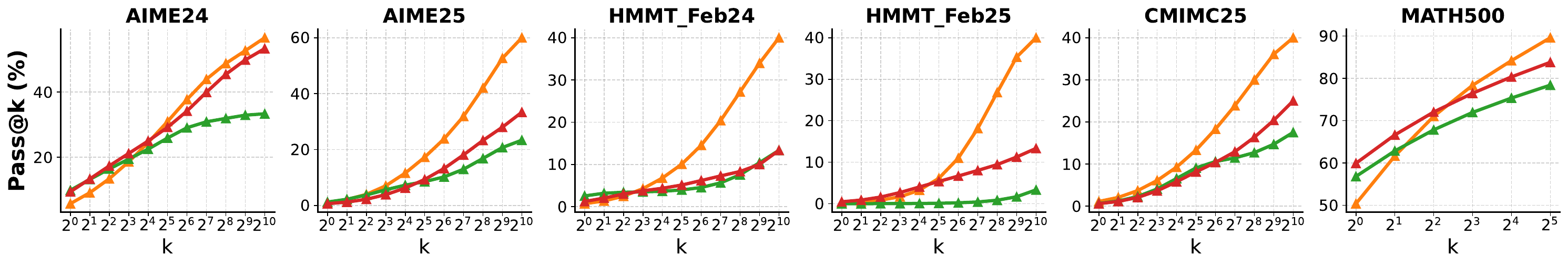}
        \label{fig:llama3-1-8b_passk}
    \end{subfigure} \\
    \vspace{-20pt}
    \caption{\textbf{Pass@k performance of RS-GRPO, GRPO, and base models.} (Qwen2.5-Math-1.5B, Qwen2.5-Math-7B, Qwen2.5-7B, Qwen3-4B-Base, and Llama3.1-8B-Instruct from top to bottom.)}
    \vspace{-10pt}
    \label{fig:passk}
\end{figure}
\begin{figure}[t]
    \centering
    \begin{subfigure}{0.9\textwidth}
        \includegraphics[width=\textwidth]{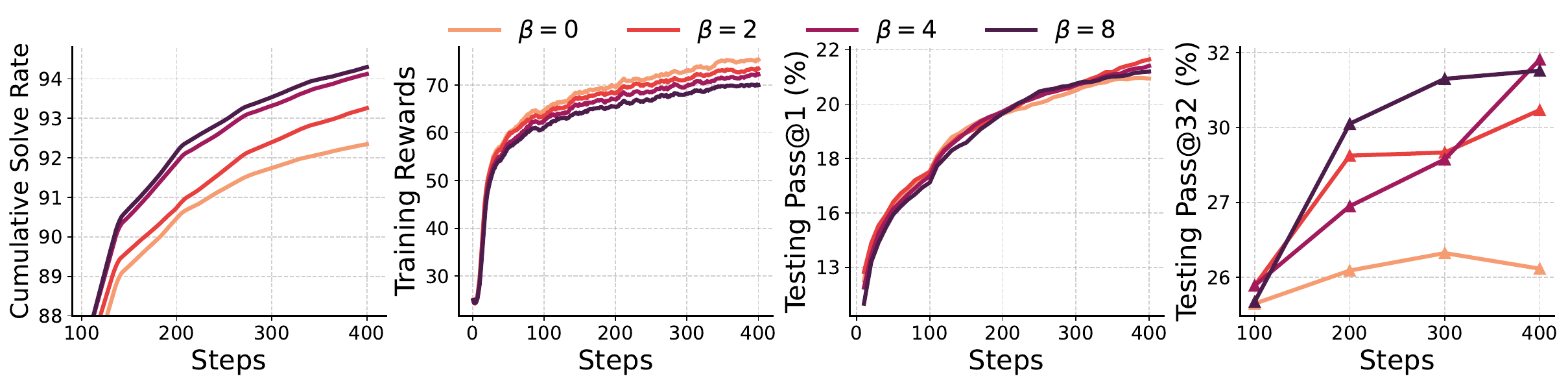}
        \label{fig:ablabeta_qwen1p5bmath}
    \end{subfigure} \\
    \vspace{-20pt}
    \begin{subfigure}{0.9\textwidth}
        \includegraphics[width=\textwidth]{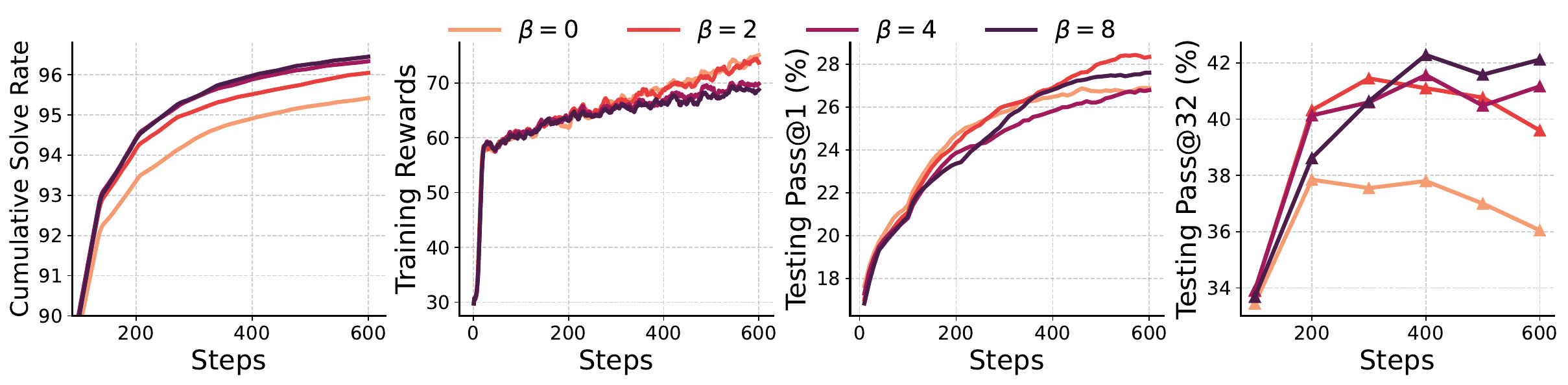}
        \label{fig:ablabeta_qwen7bmath}
    \end{subfigure} \\
    \vspace{-20pt}
    \caption{\textbf{Ablation Study of $\beta$ in RS-GRPO} on Qwen2.5-Math-1.5B (top) and -7B (bottom).}
    \label{fig:qwen1p5_7_beta_ablation}
\end{figure}
\subsection{Setup}
\paragraph{Training Setting}

We focus on mathematical reasoning tasks and train our approach on six base models: Qwen2.5-Math-1.5B, Qwen2.5-Math-7B, Qwen2.5-7B, Qwen3-4B-Base, and LLama3.1-8B-Instruct~\citep{yang2024qwen2p5math,yang2025qwen3,grattafiori2024llama}. \textbf{We name our method RS-GRPO, which extends the GRPO algorithm with the risk-sensitive advantage function (Eq.~\ref{eq:empirical_advantage}).} The training framework is built upon VeRL~\citep{sheng2024hybridflow} and incorporates techniques from DAPO~\citep{yu2025dapo}, such as dynamic sampling (filtering samples with all-0 or all-1 accuracy in each rollout) and clip-higher. We keep the shared hyperparameters identical across all comparative experiments. Full details of the training are provided in the Appendix~\ref{sec:impl}.

\paragraph{Evaluation Setting}
Our evaluation is conducted on six widely-used mathematical reasoning benchmarks: MATH500~\citep{cobbe2021math500}, AIME24, AIME25~\citep{aime25}, HMMT-Feb24, HMMT-Feb25~\citep{hmmtfeb25}, and CMIMC25~\citep{cmimc25}. The MATH500 benchmark contains 500 problems, while the other datasets consist of 30 or 40 problems each. For most benchmarks, we generate $N=1024$ candidate solutions per problem. However, for the larger MATH500 dataset, we use $N=32$ to ensure the evaluation remains computationally feasible.

\subsection{Performance on Pass@k evaluation of Risk-sensitive RL}
We present the pass@$k$ performance for $k \in \{1, 2, \dots, 1024\}$ across five LLMs and six benchmarks in Fig.~\ref{fig:passk}. The results reveal that RS-GRPO consistently and significantly outperforms both the the standard GRPO baseline, showing comprehensive improvements on the pass@$k$ metric. Notably, for several models (e.g., Qwen2.5-Math-1.5B, Qwen2.5-Math-7B, and Qwen3-4B), GRPO underperforms the base model at high values of $k$ ($k>256$). This suggests that GRPO merely sharpens the existing policy distribution rather than discovering novel solutions. In contrast, RS-GRPO surpasses the base model's performance, demonstrating its ability to expand the model's exploratory boundaries. However, for some models, such as Qwen2.5-7B and Llama3.1-8B-Instruct, RS-GRPO fails to outperform the base model at high values of $k$. We speculate this occurs when the optimal policy is prohibitively distant from the initial distribution, causing RS-GRPO to converge to a local optimum. Nonetheless, this still represents a significant improvement over GRPO.

\subsection{Different Impact of Risk-sensitive Hyperparameter $\beta$}
\label{sec:beta_ablation}
We conduct an ablation study on the risk-sensitive parameter $\beta$ to analyze its impact on training dynamics. We track several key metrics—including the cumulative solve rate on the training data, training reward, and test performance (pass@1 and pass@32)—for $\beta \in \{0, 2, 4, 8\}$. The case where $\beta=0$ is equivalent to standard RL (i.e. GRPO).

Figure~\ref{fig:qwen1p5_7_beta_ablation} illustrates the training dynamics for the Qwen2.5-Math-1.5B and Qwen2.5-Math-7B models. We observe that as $\beta$ increases, the cumulative solve rate on the training data improves, while the training reward grows more slowly. This result aligns with the theoretical analysis in Sec.~\ref{sec:theoretical_perspective}.
This slower reward growth is not necessarily a drawback, as it may indicate a regularization effect that prevents overfitting. On the test benchmarks, RS-GRPO yields substantial gains in pass@32 performance, with an improvement of approximately 5\%. While pass@1 performance is maintained relative to standard RL, an appropriate choice of $\beta$ (e.g., $\beta=2$) can lead to a 1-2\% improvement, as observed on Qwen2.5-Math-7B. This suggests that balancing the objectives of optimizing for the mean reward (pass@1) and the maximum reward (pass@k) is crucial. \textbf{We conclude that $\beta=2$ offers an effective trade-off, achieving strong pass@k performance while simultaneously enhancing pass@1.}

\subsection{Comparison to Other Pass@K Optimization Baselines}
\begin{table}[t]
\centering
\caption{
    \textbf{Main results on mathematical reasoning benchmarks, reporting pass@1 and pass@32 (\%) for five models and three training datasets. Subscripts denote improvement over GRPO.} RS-GRPO consistently outperforms the GRPO baseline on pass@32, while maintaining or improving pass@1 accuracy. RS-GRPO also achieves a better trade-off than prior pass@k optimization methods.
}
\vspace{-10pt}
\label{tab:passk_main_results}
\resizebox{\textwidth}{!}{
\begin{tabular}{llllllll}
\toprule
& \textbf{AIME24} & \textbf{AIME25} & \textbf{HMMT\_Feb24} & \textbf{HMMT\_Feb25} & \textbf{CMIMC25} & \textbf{MATH500} & \textbf{Average} \\

\midrule
\multicolumn{8}{c}{\textit{Qwen2.5-Math-1.5B (deepmath103k): Pass@1 / Pass@32}} \\[3pt] 
Base & 5.8 / 32.6 & 3.3 / 25.6 & 3.1 / 16.7 & 0.2 / 5.3 & 1.7 / 17.8 & 26.0 / 84.8 & 6.7 / 30.5 \\
GRPO        & 16.6 / 36.4 & 16.2 / 41.4 & 6.3 / 16.3 & 1.6 / 11.4 & 7.4 / 25.4 & \textbf{80.2} / 94.0 & \textbf{21.4} / 37.5 \\
\citet{walder2025pass}        & 13.7 / 34.5 & 10.0 / 38.2 & 5.9 / 15.7 & 1.5 / 13.7 & 4.3 / 25.0 & 64.7 / 92.6 & 16.7 / 36.6 \\
\citet{mahdavi2025beyond} & 15.9 / 45.0 & 15.2 / \textbf{43.4} & 6.9 / \textbf{22.7} & 1.2 / 15.5 & 7.4 / \textbf{31.6} & 75.5 / 95.4 & 20.4 / \textbf{42.3} \\
\citet{Passk_Training}       & 16.2 / 44.1 & 14.6 / 41.9 & 5.6 / 20.9 & \textbf{1.9} / 16.4 & \textbf{8.0} / 29.3 & 79.1 / 94.3 & 20.9 / 41.2 \\
\rowcolor{highlightcolor}
\bfseries RS-GRPO    & \textbf{16.7}\textsubscript{\textcolor{mygreen}{(+0.1)}} / \textbf{45.1}\textsubscript{\textcolor{mygreen}{(+8.7)}} & \textbf{16.9}\textsubscript{\textcolor{mygreen}{(+0.7)}} / 42.8\textsubscript{\textcolor{mygreen}{(+1.4)}} & \textbf{7.2}\textsubscript{\textcolor{mygreen}{(+0.9)}} / 19.9\textsubscript{\textcolor{mygreen}{(+3.6)}} & 1.7\textsubscript{\textcolor{mygreen}{(+0.1)}} / \textbf{17.6}\textsubscript{\textcolor{mygreen}{(+6.2)}} & 7.2\textsubscript{\textcolor{myred}{(-0.2)}} / 30.7\textsubscript{\textcolor{mygreen}{(+5.3)}} & 78.1\textsubscript{\textcolor{myred}{(-2.1)}} / \textbf{95.6}\textsubscript{\textcolor{mygreen}{(+1.6)}} & 21.3\textsubscript{\textcolor{myred}{(-0.1)}} / 42.0\textsubscript{\textcolor{mygreen}{(+4.5)}} \\

\midrule
\multicolumn{8}{c}{\textit{Qwen2.5-Math-7B (deepmath103k): Pass@1 / Pass@32}} \\[3pt] 
Base & 2.9 / 28.8 & 1.6 / 22.9 & 1.1 / 14.8 & 0.7 / 6.9 & 0.4 / 7.7 & 22.5 / 91.2 & 4.9 / 28.7 \\
GRPO & 25.7 / 58.0 & 19.3 / 41.2 & 10.2 / 26.2 & 7.6 / 26.1 & 11.2 / 24.1 & 85.4 / 96.0 & 26.6 / 45.3 \\
\citet{mahdavi2025beyond} & 24.6 / 61.8 & 19.0 / \textbf{48.6} & 9.4 / \textbf{36.9} & 7.6 / 23.6 & 10.9 / \textbf{37.4} & 85.8 / \textbf{97.8} & 26.2 / \textbf{51.0} \\
\citet{Passk_Training} & 26.1 / \textbf{62.3} & \textbf{21.1} / 47.1 & 8.2 / 30.9 & 5.8 / 24.3 & 10.9 / 36.2 & 85.6 / \textbf{97.8} & 26.3 / 49.8 \\
\rowcolor{highlightcolor}
\bfseries RS-GRPO & \textbf{30.2}\textsubscript{\textcolor{mygreen}{(+4.5)}} / 60.0\textsubscript{\textcolor{mygreen}{(+2.0)}} & 20.8\textsubscript{\textcolor{mygreen}{(+1.5)}} / 45.1\textsubscript{\textcolor{mygreen}{(+3.9)}} & \textbf{11.6}\textsubscript{\textcolor{mygreen}{(+1.4)}} / 29.4\textsubscript{\textcolor{mygreen}{(+3.2)}} & \textbf{8.0}\textsubscript{\textcolor{mygreen}{(+0.4)}} / \textbf{26.8}\textsubscript{\textcolor{mygreen}{(+0.7)}} & \textbf{14.7}\textsubscript{\textcolor{mygreen}{(+3.5)}} / 32.8\textsubscript{\textcolor{mygreen}{(+8.7)}} & \textbf{86.0}\textsubscript{\textcolor{mygreen}{(+0.6)}} / 95.8\textsubscript{\textcolor{myred}{(-0.2)}} & \textbf{28.6}\textsubscript{\textcolor{mygreen}{(+2.0)}} / 48.3\textsubscript{\textcolor{mygreen}{(+3.0)}} \\

\midrule
\multicolumn{8}{c}{\textit{Qwen2.5-Math-7B (dapo17k): Pass@1 / Pass@32}} \\[3pt] 
Base & 2.9 / 28.8 & 1.6 / 22.9 & 1.1 / 14.8 & 0.7 / 6.9 & 0.4 / 7.7 & 22.5 / 91.2 & 4.9 / 28.7 \\
GRPO        & \textbf{32.1} / 61.0 & \textbf{18.7} / 37.6 & 12.9 / 23.5 & 3.0 / 13.8 & 2.7 / 11.8 & 77.8 / 92.2 & 24.5 / 40.0 \\
\citet{Passk_Training}       & 28.7 / 67.4 & 17.6 / 44.6 & 11.8 / 25.8 & \textbf{4.0} / 19.3 & 5.2 / \textbf{22.8} & 79.0 / \textbf{94.8} & 24.4 / \textbf{45.8} \\
\citet{mahdavi2025beyond} & 27.8 / \textbf{68.9} & 15.6 / \textbf{46.3} & 11.5 / 24.7 & 3.8 / \textbf{19.6} & 3.2 / 15.6 & 75.5 / 93.6 & 22.9 / 44.8 \\
\rowcolor{highlightcolor}
\bfseries RS-GRPO     & \textbf{34.2}\textsubscript{\textcolor{mygreen}{(+2.1)}} / 65.8\textsubscript{\textcolor{mygreen}{(+4.8)}} & \textbf{18.7} / 40.7\textsubscript{\textcolor{mygreen}{(+3.1)}} & \textbf{16.4}\textsubscript{\textcolor{mygreen}{(+3.5)}} / \textbf{28.3}\textsubscript{\textcolor{mygreen}{(+4.8)}} & 3.5\textsubscript{\textcolor{mygreen}{(+0.5)}} / 16.8\textsubscript{\textcolor{mygreen}{(+3.0)}} & \textbf{5.4}\textsubscript{\textcolor{mygreen}{(+2.7)}} / 20.4\textsubscript{\textcolor{mygreen}{(+8.6)}} & \textbf{80.4}\textsubscript{\textcolor{mygreen}{(+2.6)}} / \textbf{94.8}\textsubscript{\textcolor{mygreen}{(+2.6)}} & \textbf{26.4}\textsubscript{\textcolor{mygreen}{(+1.9)}} / 44.5\textsubscript{\textcolor{mygreen}{(+4.5)}} \\

\midrule
\multicolumn{8}{c}{\textit{Qwen2.5-Math-7B (math12k): Pass@1 / Pass@32}} \\[3pt] 
Base & 2.9 / 28.8 & 1.6 / 22.9 & 1.1 / 14.8 & 0.7 / 6.9 & 0.4 / 7.7 & 22.5 / 91.2 & 4.9 / 28.7 \\
GRPO & \textbf{34.0} / 58.3 & 13.9 / 36.5 & \textbf{10.1} / 23.9 & 1.2 / 14.2 & 4.8 / 23.9 & 78.4 / 94.2 & 23.7 / 41.8 \\
\rowcolor{highlightcolor}
\bfseries RS-GRPO & 33.1\textsubscript{\textcolor{myred}{(-0.9)}} / \textbf{59.4}\textsubscript{\textcolor{mygreen}{(+1.1)}} & \textbf{16.7}\textsubscript{\textcolor{mygreen}{(+2.8)}} / \textbf{37.6}\textsubscript{\textcolor{mygreen}{(+1.1)}} & 10.0\textsubscript{\textcolor{myred}{(-0.1)}} / \textbf{27.2}\textsubscript{\textcolor{mygreen}{(+3.3)}} & \textbf{1.4}\textsubscript{\textcolor{mygreen}{(+0.2)}} / \textbf{14.5}\textsubscript{\textcolor{mygreen}{(+0.3)}} & \textbf{5.9}\textsubscript{\textcolor{mygreen}{(+1.1)}} / \textbf{26.8}\textsubscript{\textcolor{mygreen}{(+2.9)}} & \textbf{81.5}\textsubscript{\textcolor{mygreen}{(+3.1)}} / \textbf{94.8}\textsubscript{\textcolor{mygreen}{(+0.6)}} & \textbf{24.8}\textsubscript{\textcolor{mygreen}{(+1.1)}} / \textbf{43.4}\textsubscript{\textcolor{mygreen}{(+1.6)}} \\
\bottomrule

\midrule
\multicolumn{8}{c}{\textit{Qwen2.5-7B (math12k): Pass@1 / Pass@32}} \\[3pt] 
Base & 7.4 / 35.8 & 3.4 / 29.2 & 3.8 / 17.5 & 0.4 / 9.1 & 2.4 / 20.5 & 62.8 / 94.0 & 13.4 / 34.4 \\
GRPO & 17.7 / 36.9 & 7.5 / 25.0 & 6.4 / 13.1 & 1.9 / 10.8 & 8.4 / 25.0 & 79.0 / 91.4 & 20.2 / 33.7 \\
\bfseries RS-GRPO & \textbf{18.5}\textsubscript{\textcolor{mygreen}{(+0.8)}} / \textbf{41.1}\textsubscript{\textcolor{mygreen}{(+4.2)}} & \textbf{12.2}\textsubscript{\textcolor{mygreen}{(+4.7)}} / \textbf{33.7}\textsubscript{\textcolor{mygreen}{(+8.7)}} & \textbf{6.0}\textsubscript{\textcolor{myred}{(-0.4)}} / \textbf{14.9}\textsubscript{\textcolor{mygreen}{(+1.8)}} & \textbf{3.2}\textsubscript{\textcolor{mygreen}{(+1.3)}} / \textbf{17.4}\textsubscript{\textcolor{mygreen}{(+6.6)}} & \textbf{8.6}\textsubscript{\textcolor{mygreen}{(+0.2)}} / \textbf{28.1}\textsubscript{\textcolor{mygreen}{(+3.1)}} & \textbf{80.7}\textsubscript{\textcolor{mygreen}{(+1.7)}} / \textbf{93.6}\textsubscript{\textcolor{mygreen}{(+2.2)}} & \textbf{21.5}\textsubscript{\textcolor{mygreen}{(+1.3)}} / \textbf{38.1}\textsubscript{\textcolor{mygreen}{(+4.4)}} \\

\midrule
\multicolumn{8}{c}{\textit{Qwen3-4B-Base (math12k): Pass@1 / Pass@32}} \\[3pt] 
Base & 9.5 / 39.4 & 5.9 / 39.3 & 3.7 / 15.3 & 0.8 / 16.3 & 2.7 / 26.5 & 66.1 / 93.8 & 14.8 / 38.4 \\
GRPO & 23.0 / 56.9 & 20.8 / 46.5 & 13.6 / 31.9 & 9.5 / 22.9 & 13.6 / 35.8 & 85.9 / 94.8 & 27.7 / 48.1 \\
\rowcolor{highlightcolor}
\bfseries RS-GRPO & \textbf{26.7}\textsubscript{\textcolor{mygreen}{(+3.7)}} / \textbf{61.2}\textsubscript{\textcolor{mygreen}{(+4.3)}} & \textbf{26.1}\textsubscript{\textcolor{mygreen}{(+5.3)}} / \textbf{48.3}\textsubscript{\textcolor{mygreen}{(+1.8)}} & \textbf{14.1}\textsubscript{\textcolor{mygreen}{(+0.5)}} / \textbf{35.2}\textsubscript{\textcolor{mygreen}{(+3.3)}} & \textbf{13.1}\textsubscript{\textcolor{mygreen}{(+3.6)}} / \textbf{35.1}\textsubscript{\textcolor{mygreen}{(+12.2)}} & \textbf{14.2}\textsubscript{\textcolor{mygreen}{(+0.6)}} / \textbf{41.6}\textsubscript{\textcolor{mygreen}{(+5.8)}} & \textbf{87.4}\textsubscript{\textcolor{mygreen}{(+1.5)}} / \textbf{97.4}\textsubscript{\textcolor{mygreen}{(+2.6)}} & \textbf{30.3}\textsubscript{\textcolor{mygreen}{(+2.6)}} / \textbf{53.1}\textsubscript{\textcolor{mygreen}{(+5.0)}} \\

\midrule
\multicolumn{8}{c}{\textit{Llama-3.1-8B-Instruct (math12k): Pass@1 / Pass@32}} \\[3pt] 
Base & 5.8 / 31.0 & 1.1 / 17.2 & 0.7 / 10.1 & 0.2 / 6.1 & 1.1 / 13.3 & 50.4 / 89.6 & 9.9 / 27.9 \\
GRPO & \textbf{9.9} / 25.9 & \textbf{1.2} / 8.5 & \textbf{2.5} / 3.9 & 0.0 / 0.1 & 0.6 / 9.1 & 56.8 / 78.4 & 11.8 / 21.0 \\
\rowcolor{highlightcolor}
\bfseries RS-GRPO & 9.4\textsubscript{\textcolor{myred}{(-0.5)}} / \textbf{29.2}\textsubscript{\textcolor{mygreen}{(+3.3)}} & 0.6\textsubscript{\textcolor{myred}{(-0.6)}} / \textbf{9.3}\textsubscript{\textcolor{mygreen}{(+0.8)}} & 1.2\textsubscript{\textcolor{myred}{(-1.3)}} / \textbf{5.1}\textsubscript{\textcolor{mygreen}{(+1.2)}} & \textbf{0.5}\textsubscript{\textcolor{mygreen}{(+0.5)}} / \textbf{5.4}\textsubscript{\textcolor{mygreen}{(+5.3)}} & \textbf{0.6} / 8.1\textsubscript{\textcolor{myred}{(-1.0)}} & \textbf{59.9}\textsubscript{\textcolor{mygreen}{(+3.1)}} / \textbf{83.8}\textsubscript{\textcolor{mygreen}{(+5.4)}} & \textbf{12.0}\textsubscript{\textcolor{mygreen}{(+0.2)}} / \textbf{23.5}\textsubscript{\textcolor{mygreen}{(+2.5)}} \\

\bottomrule
\end{tabular}
}
\vspace{-10pt}
\end{table}

\begin{figure}[ht]
    \centering
    \centering
    \begin{subfigure}{0.45\textwidth}
        \includegraphics[width=\textwidth]{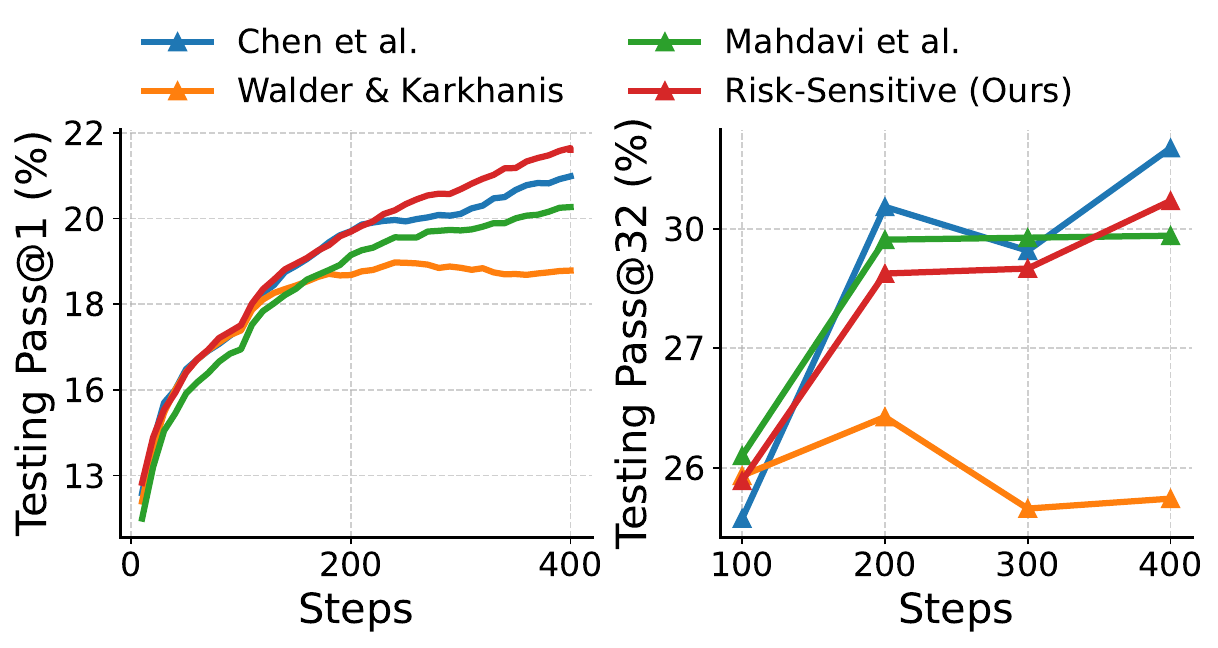}
        \label{fig:qwen1p5math_bench}
    \end{subfigure}
    \begin{subfigure}{0.45\textwidth}
        \includegraphics[width=\textwidth]{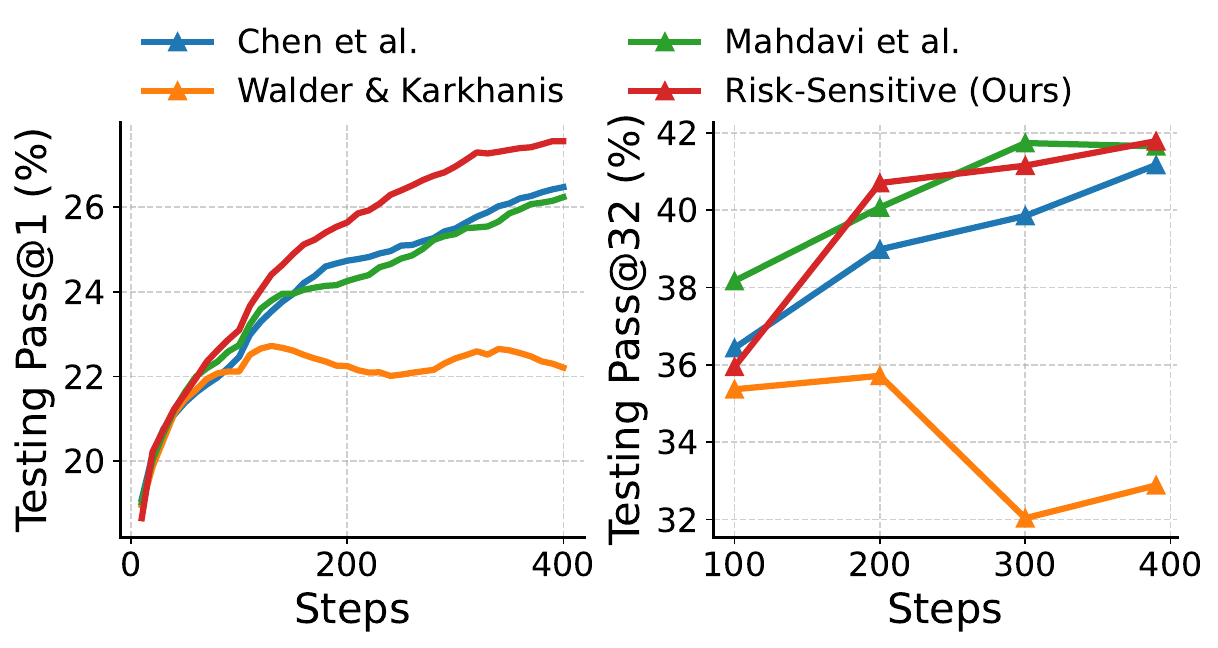}
        \label{fig:qwen7bmath_bench}
    \end{subfigure} 
    \vspace{-20pt}
    \caption{Training dynamics of Risk-Sensitive RL vs. other Pass@k optimization methods. (Left: Qwen2.5-Math-1.5B. Right: Qwen2.5-Math-7B).}
    \label{fig:bench}
\end{figure}
We compare RS-GRPO with several pass@k optimization baselines \citep{walder2025pass,mahdavi2025beyond,Passk_Training} on the Qwen2.5-Math-1.5B and Qwen2.5-Math-7B models using the deepmath103k dataset. Figure~\ref{fig:bench} shows the training dynamics: RS-GRPO generally matches the pass@32 performance of baselines while consistently outperforming them in pass@1. We attribute this improvement to the denser advantage signals provided by our risk-sensitive objective, as discussed in related work.

Table~\ref{tab:passk_main_results} provides a more comprehensive evaluation, covering five base models and three training datasets (math12k, deepmath103k, dapo17k). While many pass@k-oriented baselines fail to improve pass@1 over GRPO, RS-GRPO achieves at least comparable Pass@1 performance and exceeds GRPO by an average of about 2\% across three models (Qwen2.5-7B-Math, Qwen2.5-7B, Qwen3-4B). In addition, RS-GRPO consistently improves pass@32 over GRPO by an average of about 4\%.

We observe that the approach of \citet{walder2025pass} performs unsatisfactorily, mainly because its advantage estimates remain strictly positive (see Appendix~\ref{sec:appendix_baseline}). The absence of negative advantages causes rapid entropy collapse and poor training performance, consistent with prior findings on the importance of negative signals~\citep{zhu2025surprising}.
\subsection{Analysis of Pass@k Improvement}

\begin{wrapfigure}{r}{0.4\textwidth}
    \vspace{-20pt}
    \centering
    \includegraphics[width=\linewidth]{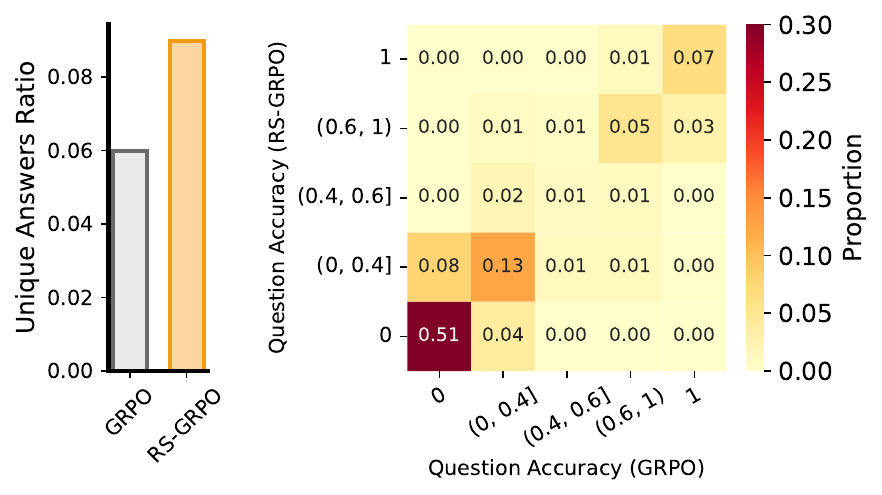}
    \caption{\textbf{Left:} RS-GRPO finds more unique solutions. \textbf{Right:} Accuracy transition map from GRPO to RS-GRPO.}
    \label{fig:analysis}
    \vspace{-20pt}
\end{wrapfigure}

We analyze how risk-sensitive RL enhances both pass@k and pass@1 performance using the Qwen2.5-Math-7B model trained on the deepmath103k dataset. For each problem in five benchmarks—AIME24, AIME25, HMMT\_Feb24, HMMT\_Feb25, and CMIMC25—we sample 1024 solutions, each corresponding to a single final answer. The left of Figure~\ref{fig:analysis} illustrates the unique answers ratio. We observe that after RS-GRPO training, the number of unique answers shows a significant increase compared to that of GRPO. This indicates that risk-sensitive RL enhances the diversity of reasoning paths.

The heatmap in Figure~\ref{fig:analysis} provides a detailed view of the prompt accuracy transitions from GRPO to RS-GRPO. We observe that 8\% of prompts with an accuracy of 0 under GRPO achieve an accuracy in the (0, 0.4] range with RS-GRPO, while only 3\% show the opposite change. This shift is the primary contributor to the improved pass@k performance. Simultaneously, 3\% of prompts with an accuracy of 1 are shifted to the (0.6, 1) range, while only 1\% move in the opposite direction. This performance trade-off explains why pass@1 improvements are more modest than gains in pass@k: while RS-GRPO can solve more problems than GRPO, it occasionally makes errors on simpler ones.

\vspace{-5pt}
\section{Conclusion}
\vspace{-5pt}

In this paper, we aim to address the exploration dilemma in fine-tuning large language models with reinforcement learning, where existing methods often improve pass@1 accuracy at the expense of solution diversity, leading to stagnation or even degradation in pass@k performance. We argue that this arises from the failure of standard RL algorithms to escape the local optima defined by the sharply-peaked initial policy of pretrained models. To overcome this, we introduce a risk-sensitive reinforcement learning framework, instantiated as the RS-GRPO algorithm. By optimizing a risk-seeking objective, our method encourages the policy to explore under-explored regions of the solution space, discovering novel reasoning paths. Our experiments on mathematical reasoning benchmarks demonstrate that RS-GRPO significantly improves pass@k performance while maintaining or improving pass@1 accuracy, achieving a more favorable trade-off than prior methods. Future work could explore the application of risk-sensitive objectives to other generative modeling domains and investigate their interplay with other exploration techniques (see Appendix~\ref{sec:limitations} for limitations).

\newpage




\bibliographystyle{plainnat}
\bibliography{paper}

\clearpage

\appendix

\section{Limitations}\label{sec:limitations}
A limitation of our current work is that all experiments were conducted with a fixed risk-seeking parameter $\beta$. A natural extension is to dynamically adjust $\beta$ during training to better balance exploration and exploitation. We experimented with several heuristics, including:
\begin{itemize}
    \item Initiating training with a high $\beta$ value, followed by a linear or cosine decay schedule after an initial training period.
    \item Beginning with a large $\beta$ and subsequently switching to standard mean-reward optimization after an initial phase.
    \item Employing an adaptive $\beta$ based on prompt difficulty, assigning larger values to harder prompts and smaller values to easier ones.
\end{itemize}
However, none of these strategies yielded superior pass@1 performance compared to training with a fixed, well-chosen $\beta$ (i.e., $\beta=2$). Devising an optimal dynamic strategy to balance exploration and exploitation remains a challenging open problem.
\section{Details About Other Pass@k Optimization}
\label{sec:appendix_baseline}
\begin{figure}[!th]
    \centering
    \begin{subfigure}{0.75\textwidth}
        \includegraphics[width=\textwidth]{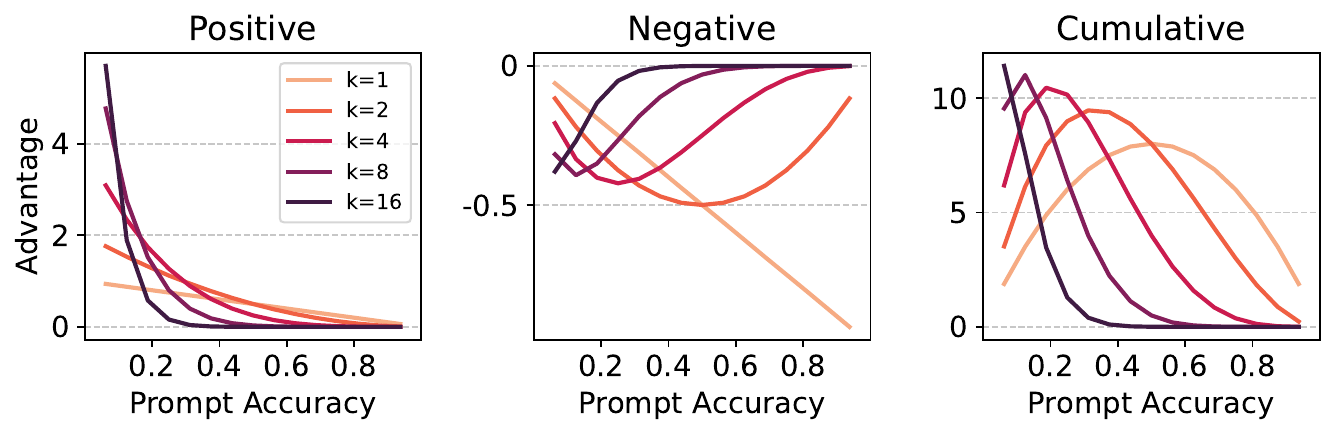}
        \caption{\citet{mahdavi2025beyond}}
        \label{fig:reweighting_advantage}
    \end{subfigure} \\
    \begin{subfigure}{0.75\textwidth}
        \includegraphics[width=\textwidth]{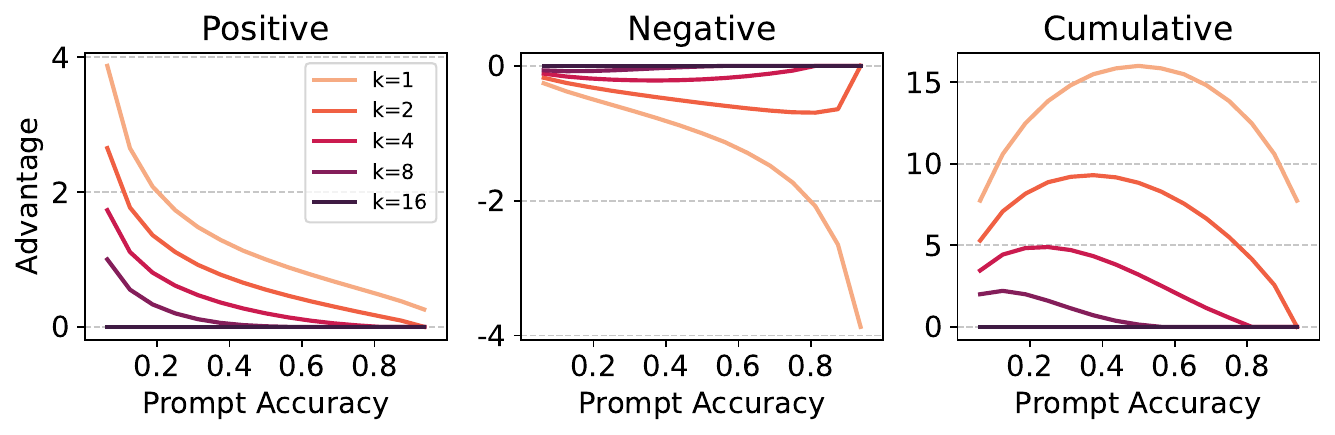}
        \caption{\citet{Passk_Training}}
        \label{fig:passk_advantage}
    \end{subfigure} \\
    \begin{subfigure}{0.75\textwidth}   
        \includegraphics[width=\textwidth]{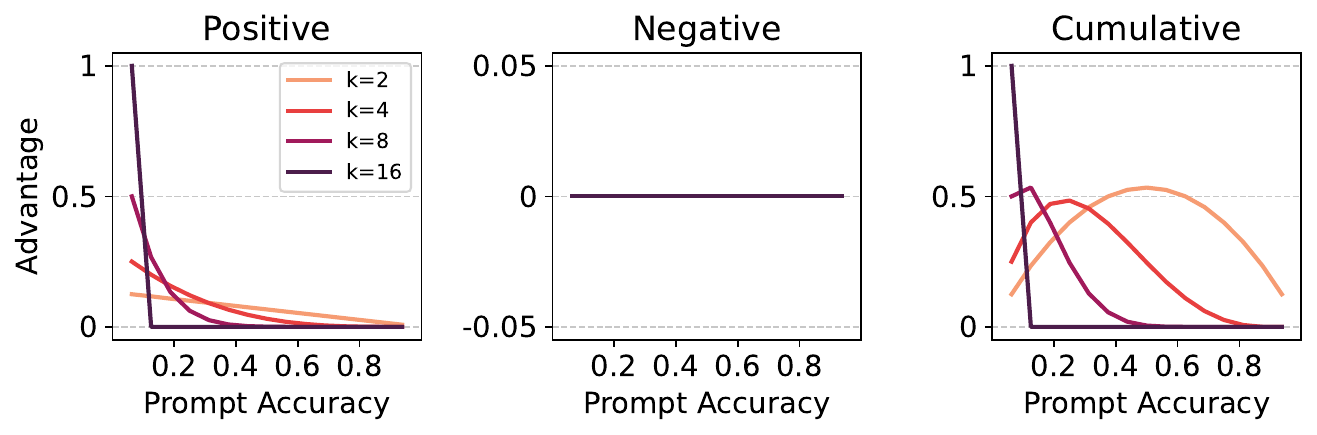}
        \caption{\citet{walder2025pass}}
        \label{fig:sloo_advantage}
    \end{subfigure} \\
    \begin{subfigure}{0.75\textwidth}
        \includegraphics[width=\textwidth]{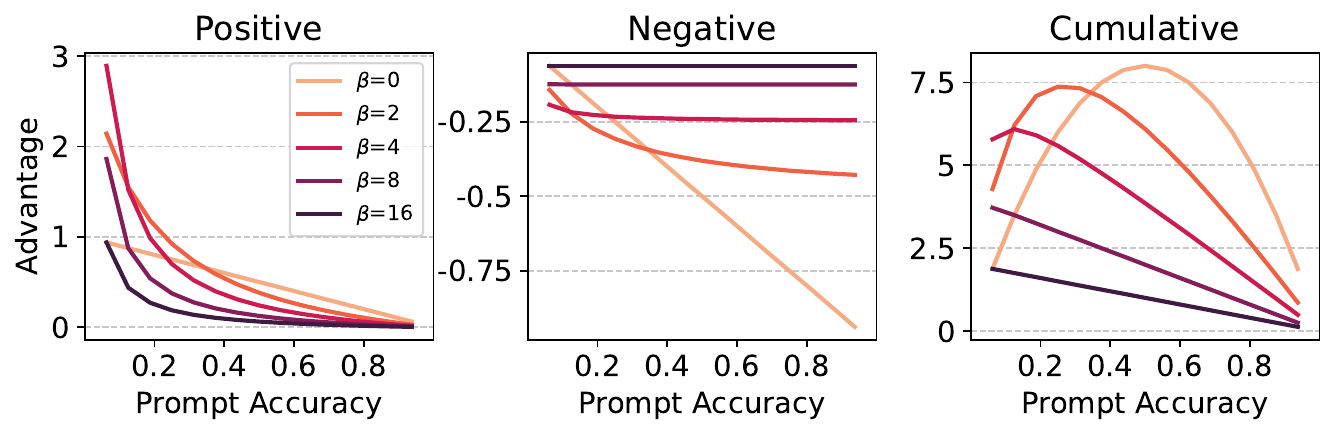}
        \caption{Risk-Sensitive RL (Ours)}
        \label{fig:risk_sensitive_advantage}
    \end{subfigure} \\
    \caption{Comparison of advantage estimations across different inference-time objective methods under the binary reward setting with $N = 16$. 
    \textbf{Left - Positive}: Advantage estimation for positive responses.
    \textbf{Middle - Negative}: Advantage estimation for negative responses.
    \textbf{Right - Cumulative}: Cumulative absolute advantage value per prompt.
    }
    \label{fig:compare_adv}
\end{figure}
We compare different methods in a binary reward setting (i.e., $r \in \{0,1\}$). For a given prompt that generates $N$ responses, let $\bar{r} \in [0, 1]$ be the mean reward and $\sigma(r)$ be its standard deviation. We define the positive advantage, $\hat{A}_{pos}$, as the advantage for responses with a reward of 1, and the negative advantage, $\hat{A}_{neg}$, for those with a reward of 0. The cumulative advantage is the sum of the absolute advantage values over all $N$ responses.

\citet{mahdavi2025beyond} employs a reweighting policy gradient in the context of pass@k optimization, under the assumption of binary rewards (0 or 1).  The expressions for the positive and negative advantages are as follows:
\begin{equation}
\begin{aligned}
   \hat{A}_{pos} &= k(1-\bar{r})^{k} \\ 
   \hat{A}_{neg} &= -k(1-\bar{r})^{k-1}\bar{r}
\end{aligned}
\end{equation}

\citet{Passk_Training} proposes a pass@k training objective, adopting the binary reward assumption. The objective is defined by the following positive and negative advantage values:
\begin{equation}
    \begin{aligned}
        \hat{A}_{pos} &= (1-\bar{r})\sigma(r)^{-1} \\ 
        \hat{A}_{neg} &= \left(1-\bar{r}-\dfrac{\binom{N_{\text {neg}}-1}{k-1}}{\binom{N-1}{k-1}}\right)\sigma(r)^{-1}
    \end{aligned}
\end{equation}

\citet{tang2025optimizing} introduces a best-of-N training objective and utilizes a leave-one-out strategy to reduce the variance of the policy gradient. The advantage $\hat{A}_i$ for this objective is defined as:
\begin{equation}
\hat{A}_i = \max\limits_{\substack{j \in \mathcal{I} \\ \mathcal{I} = \{1,2,\ldots, N\}}} r(x,y_j) 
- 
\max\limits_{\substack{j \in \mathcal{I} \\ \mathcal{I} = \{1,2,\ldots, N\}\setminus \{i\}} } r(x,y_j)
\end{equation}

\citet{walder2025pass} builds upon the work of \citet{tang2025optimizing} and further generalizes the method to a smoothed maximum objective, where $k < N$. This generalization involves considering the maximum reward within subsets of size $k$. The policy gradient for this smoothed objective is given by:
\begin{equation}
\hat{A}_{i}= 
\frac{1}{\binom{N-1}{k-1}} 
\sum_{\substack{|\mathcal{I}|=k \\ i \in \mathcal{I} \\ \mathcal{I} \subseteq\{1,2,\ldots, N\}}}
\left( \max_{j\in \mathcal{I}} r(x,y_j) - \max_{j\in \mathcal{I}\setminus\{i\}} r(x,y_j) \right).
\end{equation}

As illustrated in Fig.~\ref{fig:compare_adv}, we compare various methods based on their positive, negative, and cumulative advantages (the sum of absolute advantage values) in a binary reward setting. Our approach overcomes two key limitations of prior work. First, methods such as those in~\citep{mahdavi2025beyond,Passk_Training} are confined to binary rewards and do not naturally extend to continuous reward spaces. Second, in existing pass@k optimization techniques, the advantage estimate vanishes when the sample accuracy exceeds $(1 - \frac{K}{N})$. This limitation is highlighted in the "Cumulative" column of Fig.~\ref{fig:compare_adv}, where the magnitude of the advantage estimate, which dictates the optimization weight, drops to zero.

In our comparative analysis of different methods, we select hyperparameters to ensure a fair comparison. For Risk-Sensitive RL, we set $\beta=2$, which strikes a balance between pass@1 and pass@k performance. For the baseline methods, we use $k=4$. This choice is motivated by the observation in the "Cumulative" column of Fig.~\ref{fig:compare_adv}, where the peak advantage for RS-GRPO with $\beta=2$ is approximately 0.2, which aligns with the peak advantage of other methods when $k=4$. Our experimental results, as shown in Fig.~\ref{fig:bench}, indicate that the method from~\citet{walder2025pass} yields unsatisfactory outcomes. Its advantage estimates are persistently positive, and we observe that this absence of negative advantage leads to rapid entropy collapse and poor training performance, a finding consistent with prior work on the importance of negative advantage~\citep{zhu2025surprising}. While other methods~\citep{Passk_Training,mahdavi2025beyond} achieve pass@32 performance comparable to RS-GRPO, their pass@1 performance is substantially lower. This highlights the benefit of the denser advantage signals provided by RS-GRPO.

\section{Missing Proofs}\label{appx:proofs}

\subsection{Proof for Theorem~\ref{thm:risk_sensitive_PG}}
\begin{proof}
The proof relies on the log-derivative trick ($\nabla_\theta \pi_\theta = \pi_\theta \nabla_\theta \log \pi_\theta$). The gradient of $\mathcal{J}_x = \frac{1}{\beta} \log \mathbb{E}_{y \sim \pi_\theta}[ e^{\beta r(y)} ]$ is:
\begin{align*}
\nabla_\theta \mathcal{J}_x
&= \frac{1}{\beta} \frac{\nabla_\theta \mathbb{E}_{y \sim \pi_\theta}[e^{\beta r(y)}]}{\mathbb{E}_{y \sim \pi_\theta}[e^{\beta r(y)}]} 
= \frac{1}{\beta} \frac{\mathbb{E}_{y \sim \pi_\theta}[e^{\beta r(y)} \nabla_\theta \log \pi_\theta(y|x)]}{\mathbb{E}_{y' \sim \pi_\theta}[e^{\beta r(y')}]} \tag{Log-derivative trick} \\
&= \mathbb{E}_{y \sim \pi_\theta}\!\left[
    \frac{e^{\beta r(y)}}{\beta \cdot \mathbb{E}_{y' \sim \pi_\theta}[e^{\beta r(y')}]}
    \nabla_\theta \log \pi_\theta(y|x) \right].
\end{align*}
Here, $y'$ is a dummy variable for the inner expectation. Subtracting the baseline $1/\beta$ from the advantage term gives the final form in Eq.~\eqref{eq:rs_advantage}, which is an unbiased estimator with reduced variance.
\end{proof}

\subsection{Theoretical Analysis of Risk-Sensitive Policy Gradient}
In this section, we provide the formal version of lemmas in Sec.~\ref{sec:theoretical_perspective}, and the detailed proofs.

We study the general cases without restricting the uniqueness of the optimal arm. 
Recall that we consider the bandit setting with $K$ actions $\cA:=\{a_1,...,a_K\}$. 
We will use $\cI^* := \{i\in[K]|r(a_i) = \max_{j\in[K]} r(a_j)\}$ to refer to the collection of indices of all optimal actions. With a bit abuse of notation, we denote $\pi(\cI^*) := \sum_{i\in\cI^*} \pi(a_i)$ to be the total mass of $\pi$ on optimal actions.

We consider the softmax policy $\pi_\theta$ parameterized by $\theta := [\theta_1,...,\theta_K] \in \mR^{K}$:
\begin{align*}
    \forall i\in[K],\quad \pi_{\theta}(a_i) = \frac{e^{\theta_{i}}}{\sum_{j\in[K]} e^{\theta_{j}}}.
\end{align*}
Following the notation in Sec.~\ref{sec:theoretical_perspective}, we denote $\tilde{\theta}:=[\tilde{\theta}_1,...,\tilde{\theta}_K] \in \mR^K$ and $\tilde{\theta}^\beta:=[\tilde{\theta}_1^\beta,...,\tilde{\theta}_K^\beta] \in \mR^K$ to be the parameters after performing one-step standard PG and risk-sensitive PG on $\theta$, respectively.
Combining with the policy gradient theorem for softmax policy in \citep{pmlr-v119-mei20b}, as implied by Eq.~\eqref{eq:standard_pg} and Eq.~\eqref{eq:risk_sensitive_pg}, elementwisely, the updates of the policy parameters follow:
\begin{align}
    \forall i\in[K],\quad \tilde{\theta}_i \gets 
    \theta_i + \alpha \pi_\theta(a_i) A^{\pi_\theta}(a_i), \label{eq:specific_PG_update_rule} \\
    \quad \tilde{\theta}_i \gets \theta_i^\beta + \alpha \pi_\theta(a_i) A^{\pi_\theta}_\beta(a_i), \label{eq:specific_rs_PG_update_rule}
\end{align}
where $\alpha > 0$ denotes an arbitrary and shared learning rate.

For simplicity, we use $\pi_{i} := \pi_{\theta}(a_i)$ and $A_{i} := A^{\pi_{\theta}}(a_i)$ as short notes of policy and advantage values regarding $\theta$, and use $\tilde{\pi}_i := \pi_{\tilde{\theta}}(a_i)$ and $\tilde{\pi}^\beta_i := \pi_{\tilde{\theta}^\beta}(a_i)$ as the short note of policy value w.r.t. the parameters after being updated.
Similarly, $\pi_{\cI^*}, \tilde{\pi}_{\cI^*}$ and $\tilde{\pi}_{\cI^*}^\beta$ denote the total policy density assigned to the set of optimal actions.

By Eq.~\eqref{eq:specific_PG_update_rule}, the dynamics of $\tpi$ and $\tpi^\beta$ follow:
\begin{align}
    \tilde{\pi}_{i} =& \frac{e^{\theta_{i} + \alpha \pi_{i}A_{i}}}{\sum_j e^{\theta_{j} + \alpha \pi_{j} A_{j}}} = \frac{e^{\theta_{i}} }{\sum_j e^{\theta_{j} + \alpha \pi_{j} A_{j} - \alpha \pi_{i}A_{i}}} = \frac{\pi_{i}}{\sum_j \pi_{j} e^{\alpha \pi_{j} A_{j} - \alpha \pi_{i}A_{i}}}. \label{eq:dyn_policy}
\end{align}

\begin{remark}
    Note that $\min_{i\in[K]}r(a_i) = \max_{i\in[K]} r(a_j)$ is a trivial case where every action is optimal. We only focus on cases where $\min_{i\in[K]}r(a_i) < \max_{i\in[K]} r(a_j)$.
\end{remark}

\begin{restatable}{lemma}{LemDecreaseOptActProb}[Formal Version of Lem.~\ref{lem:informal_decrease_opt_act_prob}]\label{lem:decrease_opt_act_prob}
    As long as $\exists i' \in [K]$ satisfying $\max_i r(a_i) > r(a_{i'}) > \min_i r(a_i)$, there exists $\theta$ (or equivalently, $\pi_\theta$), s.t., $\tilde{\pi}_{\cI^*} < \pi_{\cI^*}$, or even, $\tilde{\pi}_{i} < \pi_{i}$ for any $i\in\cI^*$ and any learning rate $\alpha > 0$.
\end{restatable}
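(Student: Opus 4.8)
The plan is to reduce the claim to a one-dimensional inequality in the learning rate $\alpha$ and then satisfy it with a carefully chosen policy. Fix an optimal index $i^* \in \cI^*$, so that $A_{i^*} = \max_i r(a_i) - \bar r \ge 0$ where $\bar r = \sum_j \pi_j r(a_j)$. Starting from the explicit softmax dynamics in Eq.~\eqref{eq:dyn_policy}, a direct rearrangement shows that $\tilde{\pi}_{i^*} < \pi_{i^*}$ is equivalent to
\begin{equation}
    g(\alpha) := \sum_{j\in[K]} \pi_j\, e^{\alpha \pi_j A_j} > e^{\alpha \pi_{i^*} A_{i^*}}. \nonumber
\end{equation}
Since all optimal actions share the same advantage, establishing this for every $i^* \in \cI^*$ and summing yields the weaker conclusion $\tilde{\pi}_{\cI^*} < \pi_{\cI^*}$, so it suffices to produce a single policy making $g(\alpha) > e^{\alpha \pi_{i^*} A_{i^*}}$ hold for all $\alpha > 0$ simultaneously.

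The difficult feature is precisely this ``for all $\alpha > 0$'' requirement under a fixed $\theta$, and I would handle it by passing to logarithms rather than estimating $g$ directly. Let $\Lambda(\alpha) := \log g(\alpha)$, the log-moment-generating function of the random variable taking value $\pi_j A_j$ with probability $\pi_j$. Then $\Lambda(0) = 0$, $\Lambda'(0) = \sum_j \pi_j^2 A_j$, and $\Lambda$ is strictly convex whenever the values $\{\pi_j A_j\}_{\pi_j > 0}$ are not all equal, since $\Lambda''$ is a tilted variance. Strict convexity gives the tangent-line bound $\Lambda(\alpha) > \Lambda(0) + \alpha \Lambda'(0) = \alpha \sum_j \pi_j^2 A_j$ for every $\alpha > 0$. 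Consequently the target inequality holds for all $\alpha > 0$ as soon as the single moment condition
\begin{equation}
    \sum_{j\in[K]} \pi_j^2 A_j \;\ge\; \pi_{i^*} A_{i^*} \qquad \text{for all } i^* \in \cI^* \nonumber
\end{equation}
is satisfied, converting the dynamical claim into a static inequality about one policy.

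It then remains to construct $\pi_\theta$ realizing this condition. By hypothesis there is an intermediate action $a_{i'}$ with $\min_i r(a_i) < r(a_{i'}) < \max_i r(a_i)$, hence also a minimal-reward action; write $v = r(a_{i'})$ and $m = \min_i r(a_i)$. I would place a large mass $p$ on $a_{i'}$, a smaller mass $q < p$ on a minimum-reward action, a vanishing total mass $\delta$ spread over the optimal actions, and negligible mass on the rest (any fully supported target is softmax-realizable, and the inequalities are strict, so a limiting configuration perturbs into a valid $\theta$). As $\delta \to 0$ one has $\bar r \to pv + qm$, so $A_{i'} \to q(v-m) > 0$, the optimal advantage stays bounded, and a short computation gives $\sum_j \pi_j^2 A_j \to pq(v-m)(p-q) > 0$ while the right-hand side $\pi_{i^*} A_{i^*} = O(\delta) \to 0$; thus the moment condition holds strictly for all small $\delta > 0$. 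The main obstacle is conceptual rather than computational: one must avoid analyzing $g(\alpha)$ head-on and instead exploit the convexity of $\Lambda$, which is what delivers uniformity over all learning rates from a single first-order moment check. The accompanying construction only needs the intermediate action to create a positive-advantage decoy and the minimal action to push the mean below $v$, so that this decoy genuinely attracts probability mass away from the optimal arm.
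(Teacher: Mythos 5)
Your proof is correct and follows essentially the same route as the paper's: both reduce the claim, via convexity of the exponential (your tangent-line bound on the log-moment-generating function is exactly Jensen's inequality $\log\sum_j\pi_j e^{X_j}\ge\sum_j\pi_j X_j$), to the static moment condition $\sum_j\pi_j^2A_j>\pi_{i^*}A_{i^*}$ for $i^*\in\cI^*$, and both then construct a policy concentrated on the intermediate-reward action $a_{i'}$ so that it acquires a strictly positive advantage while the optimal arms carry vanishing mass. Your parametrization of the construction (fixed masses $p>q$ on $a_{i'}$ and a worst arm, with only the optimal mass $\delta\to0$) is arguably cleaner than the paper's ``$\pi_{i'}$ close to $1$'' limit, since it keeps $\sum_j\pi_j^2A_j$ bounded away from zero rather than relying on a comparison of vanishing rates, but the underlying idea is identical.
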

\begin{proof}
    The proof is by construction.
    By Eq.~\eqref{eq:dyn_policy}, $\tilde{\pi}_{i} < \pi_{i}$ as long as $\sum_j \pi_{j} e^{\alpha \pi_{j} A_{j} - \alpha \pi_{i} A_{i}} > 1$.
    By the convexity of exponential function,
    \begin{align*}
        \sum_j \pi_{j} e^{\alpha \pi_{j} A_{j} - \alpha \pi_{i} A_{i}} \geq e^{\alpha (\sum_j \pi_{j}^2 A_{j} -  \pi_{i} A_{i})},
    \end{align*}
    and all we need to do is to construct a $\pi_\theta$ s.t. the RHS above is larger than 1. 
    
    For convenience, we denote $i^- := \argmin_{i\in[K]} r(a_i)$ to be (one of) the worst action(s), and denote $i'$ to be (one of) the second optimal actions such that $r(a_{i'}) = \max_{i\in[K]\setminus \cI^*}r(a_i)$.
    Besides, we denote $r_{\max} := \max_{i} r(a_i)$ and $r_{\min} := \min_{i} r(a_i)$ as the maximal and minimal policy values, respectively.

    Note that,
    \begin{align*}
        A_{i'} = r(a_{i'}) - \sum_{i\in[K]} \pi_{i} r(a_i) >  \pi_{i^-} (r(a_{i'}) - r_{\min}) - \pi_{\cI^*} (r_{\max} - r(a_{i'})).
    \end{align*}
    Consider an arbitrary policy satisfying the following constraint:
    \begin{align}
        0 < \pi_{\cI^*} < \frac{r(a_{i'}) - r_{\min}}{r_{\max}-r(a_{i'})}\pi_{i^-}, \label{eq:constraint_1}
    \end{align}
    which implies $A_{i'} > 0$.

    Since $A_{j}\in[-1,1]$, under the constraints by Eq.~\eqref{eq:constraint_1} and that $\forall i, \pi_{i} > 0$ and $\sum_{i} \pi_{i} = 1$, as long as $\pi_{i'}$ is close enough to 1, for any optimal action index $i^* \in \cI^*$, we have:
    \begin{align}
        \sum_j \pi_{j}^2 A_{j} - \pi_{i^*} A_{i^*} 
        \geq & \pi_{i'}^2 A_{i'} - \pi_{i^*} - \sum_{j\neq i'} \pi_{j}^2 > 0. \label{eq:power}
    \end{align}
    This implies $\tilde{\pi}_{i^*} < \pi_{i^*}$ for any optimal action $a_{i^*}$, and therefore,
    \begin{align*}
        \tilde{\pi}_{\cI^*} < \pi_{\cI^*}.
    \end{align*}
    We remark that our required conditions, Eq.~\eqref{eq:constraint_1} and that $\pi_{i'}$ is close enough to 1, can happen in the early training stage when the policy's mass concentrates on suboptimal actions.
\end{proof}

\begin{restatable}{lemma}{LemRSGaurantee}[Formal Version of Lem.~\ref{lem:informal_RS_Gaurantee}]\label{lem:RS_Gaurantee}
    For any given $r$ and $\theta$, consider the risk-sensitive update (Eq.~\eqref{eq:risk_sensitive_pg} or Eq.~\eqref{eq:specific_rs_PG_update_rule}), there always exists $\bar{\beta}$, for any $\beta > \bar{\beta}$ and $\alpha>0$, we have $\tilde{\pi}^\beta_{\cI^*}  > \pi_{\cI^*}$.
\end{restatable}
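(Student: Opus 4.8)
The plan is to track the aggregate optimal mass $\tilde{\pi}^\beta_{\cI^*}$ as a function of the (arbitrary) learning rate $\alpha$ and to show it strictly exceeds its value $\pi_{\cI^*}$ at $\alpha=0$. By the risk-sensitive analogue of Eq.~\eqref{eq:dyn_policy}, $\tilde{\pi}^\beta_i = \pi_i e^{\alpha \pi_i A^\beta_i}\big/\sum_j \pi_j e^{\alpha \pi_j A^\beta_j}$, where I abbreviate $A^\beta_i := A^{\pi_\theta}_\beta(a_i)$. The first step is to pin down the sign of each advantage for large $\beta$. Writing $Z_\beta := \sum_j \pi_j e^{\beta r(a_j)}$, an optimal arm always satisfies $e^{\beta r_{\max}}/Z_\beta > 1$ because $Z_\beta < e^{\beta r_{\max}}$ whenever $r_{\min} < r_{\max}$, so $A^\beta_i > 0$ for every $i \in \cI^*$ and every $\beta > 0$. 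For a suboptimal arm, the bound $Z_\beta \ge \pi_{\cI^*} e^{\beta r_{\max}}$ gives $e^{\beta r(a_i)}/Z_\beta \le \pi_{\cI^*}^{-1} e^{-\beta (r_{\max}-r_{(2)})}$, where $r_{(2)} := \max_{i\notin\cI^*} r(a_i) < r_{\max}$; this falls below $1$ once $\beta > \bar\beta := \log(1/\pi_{\cI^*})\big/(r_{\max}-r_{(2)})$, forcing $A^\beta_i < 0$. Crucially, this threshold depends only on $\pi_\theta$ and $r$, not on $\alpha$.

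With the signs fixed, I would define $f(\alpha) := \tilde{\pi}^\beta_{\cI^*}$, note $f(0) = \pi_{\cI^*}$, and prove $f'(\alpha) > 0$ for all $\alpha \ge 0$. Differentiating the quotient and regrouping terms through the post-update distribution $\mu_\alpha(i) := \tilde{\pi}^\beta_i$ (a genuine probability vector with full support at every $\alpha$), the derivative collapses to a covariance under $\mu_\alpha$:
\begin{equation*}
f'(\alpha) = \mathrm{Cov}_{\mu_\alpha}\!\big(\mathbf{1}[i\in\cI^*],\; \pi_i A^\beta_i\big) = \mu_\alpha(\cI^*)\,\mu_\alpha(\cI^{*c})\,\big(\EE_{\mu_\alpha}[\pi_i A^\beta_i \mid i\in\cI^*] - \EE_{\mu_\alpha}[\pi_i A^\beta_i \mid i\notin\cI^*]\big).
\end{equation*}
For $\beta > \bar\beta$ the quantity $\pi_i A^\beta_i$ is strictly positive exactly on $\cI^*$ and strictly negative exactly off it, so the first conditional mean is positive and the second is negative; since $\mu_\alpha$ assigns positive mass to both $\cI^*$ and its complement (the latter nonempty because $r_{\min}<r_{\max}$), every factor above is positive and hence $f'(\alpha) > 0$. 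Integrating from $0$ to $\alpha$ then yields $\tilde{\pi}^\beta_{\cI^*} = f(\alpha) > f(0) = \pi_{\cI^*}$ for every $\alpha > 0$, which is the claim.

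The main obstacle is securing a threshold $\bar\beta$ that is \emph{uniform} in the learning rate, as the statement demands a single $\bar\beta$ valid for all $\alpha>0$ at once. A naive first-order expansion in $\alpha$ would only handle small steps and would entangle $\bar\beta$ with $\alpha$. The covariance identity for $f'(\alpha)$ is exactly what removes this difficulty: the sign pattern of $\pi_i A^\beta_i$ is dictated solely by $\beta$ through the advantage and is independent of $\alpha$, so positivity of $f'$ holds simultaneously at every learning rate, leaving only the $\alpha$-free requirement that suboptimal advantages be negative. A secondary subtlety is the multiplicity of optimal arms; the argument must control the aggregate $\pi_{\cI^*}$ rather than a single coordinate, but the covariance formulation handles this directly since $\mathbf{1}[i\in\cI^*]$ already aggregates over $\cI^*$.
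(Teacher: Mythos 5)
Your proposal is correct and follows essentially the same route as the paper's proof: you establish the identical sign dichotomy (optimal arms have positive risk-sensitive advantage for all $\beta>0$, suboptimal arms have negative advantage once $\beta$ exceeds the same threshold $\bar\beta=\frac{1}{\Delta}\log\frac{1}{\pi_{\cI^*}}$, which is manifestly independent of $\alpha$) and then read off the increase of the aggregate optimal mass from the multiplicative form of the softmax update. The only difference is cosmetic: the paper concludes directly from the fact that the numerators $\pi_i e^{\alpha\pi_i A_{\beta,i}}$ increase on $\cI^*$ and decrease off it, whereas you repackage this last step as positivity of $\frac{d}{d\alpha}\tilde{\pi}^\beta_{\cI^*}$ via a covariance identity and integrate, which is a valid (and slightly more explicit) way to finish.
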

\begin{proof}
In the risk sensitive setting, recall the advantage function takes
\begin{align*}
    A_\beta^{\pi_\theta} := \frac{1}{\beta}(\frac{e^{\beta r(a_i)}}{\EE_{a_j\sim\pi_\theta}[e^{\beta r(a_j)}]} - 1)
\end{align*}
For convenience, we use $A_{\beta,i} := A_\beta^{\pi_\theta}(a_i)$ as a short note.

By Eq.~\eqref{eq:specific_rs_PG_update_rule}, the risk-sensitive policy gradient yields:
\begin{align}
    \tilde{\pi}_{i} = \frac{\pi_{i}}{\sum_j \pi_{j} e^{\alpha(\pi_{j} A_{\beta,j} - \pi_{i} A_{\beta,i}})} = \frac{\pi_{i} e^{\alpha \pi_{i} A_{\beta,i}}}{Z}. \label{eq:next_step_policy}
\end{align}
Here $Z:= \sum_{j} \pi_{j} e^{\alpha \pi_{j} A_{\beta,j}}$ denotes a normalization term independent of $i$.

Now, let's denote $i'$ to be the second optimal action satisfying $r(a_{i'}) = \max_{i\in[K]\setminus\cI^*} r(a_i)$ and denote $\Delta := \max_i r(a_i) - r(a_{i'}) > 0$ to be its value gap.

Easy to see that, for any $i \in \cI^*$, $A_{\beta,i^*} > 0$, while for any $i\not\in\cI^*$, 
\begin{align*}
    A_{\beta,i}  = \frac{1}{\beta}(\frac{1}{\EE_{a_j\sim\pi_\theta}[e^{\beta r(a_j) - \beta r(a_i)}]} - 1) \leq \frac{1}{\beta}(\frac{1}{\pi_{\cI^*} e^{\beta \Delta}} - 1).
\end{align*}

Therefore, as long as $\beta \geq \frac{1}{\Delta}\log\frac{1}{\pi_{\cI^*}}$, we have $A_{\beta,i} < 0$, which implies 
\begin{align*}
    \forall i \in \cI^*, & \quad \pi_{i} e^{\alpha \pi_{i} A_{\beta,i}} > \pi_{i},\\
    \forall i \not\in \cI^*, & \quad \pi_{i} e^{\alpha \pi_{i} A_{\beta,i}} < \pi_{i}.
\end{align*}
By Eq.~\eqref{eq:next_step_policy}, we must have $\tilde{\pi}^\beta_{\cI^*}  > \pi_{\cI^*}$.
\end{proof}

\begin{restatable}{lemma}{LemHighBetaLowImprove}[Formal Version of Lem.~\ref{lem:informal_higher_beta_lower_improvement}]\label{lem:higher_beta_lower_improvement}
    For any given $r$ and $\theta$, there exists $\bar{\beta}$, s.t., for any $\beta_1 > \beta_2 > \bar{\beta}$, $0 < \tilde{\pi}_{\cI^*}^{\beta_1} - \pi_{\cI^*} < \tilde{\pi}_{\cI^*}^{\beta_2} - \pi_{\cI^*}$ for any fixed learning rate $\alpha > 0$.
\end{restatable}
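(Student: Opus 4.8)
The plan is to analyze the large-$\beta$ asymptotics of the risk-sensitive advantage and show that the one-step improvement on $\pi_{\cI^*}$ behaves like $\alpha C/\beta$ for an explicit constant $C>0$, which is both positive and strictly decreasing in $\beta$. First I would pin down the limiting form of $A_{\beta,i}$. Writing $Z_\beta := \EE_{a_j\sim\pi_\theta}[e^{\beta r(a_j)}] = e^{\beta r_{\max}}(\pi_{\cI^*} + \sum_{j\notin\cI^*}\pi_j e^{-\beta(r_{\max}-r(a_j))})$, the suboptimal contributions are exponentially suppressed at rate $\Delta = r_{\max}-r(a_{i'})$ (the value gap from the proof of Lem.~\ref{lem:RS_Gaurantee}). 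Hence $e^{\beta r(a_i)}/Z_\beta \to 1/\pi_{\cI^*}$ for $i\in\cI^*$ and $\to 0$ for $i\notin\cI^*$, giving $A_{\beta,i} = \tfrac{1}{\beta}(\hat A_i + O(e^{-\beta\Delta}))$ with $\hat A_i = (1-\pi_{\cI^*})/\pi_{\cI^*}>0$ on optimal arms and $\hat A_i = -1$ on suboptimal ones. The key structural fact is that every advantage is $\Theta(1/\beta)$, so the update exponents $\alpha\pi_i A_{\beta,i}$ are uniformly $O(1/\beta)$.

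Next I would substitute into the softmax update $\tilde\pi_i^\beta = \pi_i e^{\alpha\pi_i A_{\beta,i}}/\sum_j \pi_j e^{\alpha\pi_j A_{\beta,j}}$ and Taylor-expand in the small exponents. Using the exact identity $\sum_j\pi_j A_{\beta,j}=0$ (the baseline-subtracted advantage is mean-zero under $\pi_\theta$), the first-order expansion yields $\tilde\pi_{\cI^*}^\beta-\pi_{\cI^*} = \alpha(\sum_{i\in\cI^*}\pi_i^2 A_{\beta,i} - \pi_{\cI^*}\sum_j\pi_j^2 A_{\beta,j}) + O(1/\beta^2)$. Plugging in the limiting $\hat A_i$ collapses this to $\tilde\pi_{\cI^*}^\beta-\pi_{\cI^*} = \tfrac{\alpha C}{\beta} + O(1/\beta^2)$ with $C = \tfrac{(1-\pi_{\cI^*})^2}{\pi_{\cI^*}}\sum_{i\in\cI^*}\pi_i^2 + \pi_{\cI^*}\sum_{j\notin\cI^*}\pi_j^2$. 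Since every action has positive softmax probability and a suboptimal arm exists (by the Remark, $r_{\min}<r_{\max}$), we have $0<\pi_{\cI^*}<1$, which forces $C>0$. This re-establishes the positivity claimed in the lemma (consistent with Lem.~\ref{lem:RS_Gaurantee}) and shows the improvement is exactly of order $1/\beta$.

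For the strict monotonicity I would reparametrize with $t:=1/\beta$ and set $f(t):=\tilde\pi_{\cI^*}^\beta-\pi_{\cI^*}$. The exponentially small corrections $O(e^{-\Delta/t})$ are flat at $t=0$, so $f$ extends to a function with $f(t)=\alpha C t + O(t^2)$ and $f'(0^+)=\alpha C>0$; by continuity of the derivative, $f'(t)>0$ on some interval $(0,\bar t)$. Since $\tfrac{df}{d\beta}=-t^2 f'(t)<0$ there, the map $\beta\mapsto \tilde\pi_{\cI^*}^\beta-\pi_{\cI^*}$ is strictly decreasing for $\beta>\bar\beta:=1/\bar t$, where I also enlarge $\bar\beta$ to exceed the threshold of Lem.~\ref{lem:RS_Gaurantee} so that the improvement stays strictly positive. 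This gives $0<\tilde\pi_{\cI^*}^{\beta_1}-\pi_{\cI^*}<\tilde\pi_{\cI^*}^{\beta_2}-\pi_{\cI^*}$ for all $\beta_1>\beta_2>\bar\beta$, as required.

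The main obstacle is upgrading the pointwise asymptotic $f(\beta)\sim \alpha C/\beta$ into genuine strict monotonicity on an entire tail interval: I must control not just the value but the derivative of $f$, i.e., verify that the $O(t^2)$ remainder and the exponentially small advantage corrections have derivatives that remain dominated by the leading $\alpha C$ term as $t\to 0^+$. Concretely this amounts to careful bookkeeping of $\tfrac{d}{d\beta}A_{\beta,i}$ (which is $\Theta(1/\beta^2)$ plus exponentially small pieces, by differentiating the ratio $e^{\beta r(a_i)}/Z_\beta$) propagated through the quotient-rule differentiation of the softmax update, so as to confirm that the dominant balance of $\tfrac{df}{d\beta}$ is $-\alpha C/\beta^2<0$.
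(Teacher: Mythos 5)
Your proposal is correct in substance but organizes the argument differently from the paper. You extract the explicit large-$\beta$ asymptotics $A_{\beta,i}=\tfrac{1}{\beta}(\hat A_i+O(e^{-\beta\Delta}))$ with $\hat A_i=(1-\pi_{\cI^*})/\pi_{\cI^*}$ on $\cI^*$ and $-1$ off it, Taylor-expand the softmax update, and obtain the quantitative statement $\tilde\pi^\beta_{\cI^*}-\pi_{\cI^*}=\alpha C/\beta+O(1/\beta^2)$ with an explicit $C>0$; monotonicity then follows from controlling $df/d\beta$. The paper never expands the update: it differentiates each advantage in $\beta$ directly, shows via the sign of $\EE^2[e^{\beta\Delta_{j,i}}]-\EE[(1+\beta\Delta_{j,i})e^{\beta\Delta_{j,i}}]$ that for $\beta$ large every $A_{\beta,i}$ with $i\in\cI^*$ is positive and decreasing while every $A_{\beta,i}$ with $i\notin\cI^*$ is negative and increasing, and then reads off the conclusion from the exact update $\tilde\pi^\beta_{\cI^*}=u/(u+v)$ with $u=\sum_{i\in\cI^*}\pi_ie^{\alpha\pi_iA_{\beta,i}}$ decreasing and $v$ increasing in $\beta$. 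The two routes rest on the same underlying fact (your $A'_{\beta,i}\approx-\hat A_i/\beta^2$ is exactly the limit of the paper's bracket computation), but the paper's sign-only argument sidesteps the remainder control entirely, whereas your route buys the explicit rate $\alpha C/\beta$ at the cost of the derivative bookkeeping you correctly flag as the remaining obstacle. That bookkeeping is genuinely the one unfinished step in your write-up: the value estimate $f(t)=\alpha Ct+O(t^2)$ alone does not imply $f'>0$ near $t=0$, so to close the argument you must either carry out the quotient-rule computation of $df/d\beta$ you sketch, or simply switch to the paper's device of establishing the signs of $dA_{\beta,i}/d\beta$ and invoking monotonicity of $u/(u+v)$, which requires no expansion at all.
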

\begin{proof}
We view $A_{\beta,i} := A^{\pi_\theta}_\beta(a_i)$ as a continuous function in $\beta$:
\begin{align*}
    A_{\beta,i} := \frac{1}{\beta}(\frac{e^{\beta r(a_i)}}{\EE_{a_j\sim\pi_\theta}[e^{\beta r(a_j)}]} - 1) = \frac{1}{\beta}(\frac{1}{\EE_{a_j\sim\pi_\theta}[e^{\beta \Delta_{j,i}}]} - 1),
\end{align*}
where we use $\Delta_{j,i} := r(a_j) - r(a_i)$ as a short note.
By taking the derivative w.r.t. $\beta$, we have:
\begin{align*}
    A'_{\beta,i} =& -\frac{1}{\beta^2}(\frac{1}{\EE_{a_j\sim\pi_\theta}[e^{\beta \Delta_{j,i}}]} - 1) - \frac{1}{\beta} \frac{\EE_{a_j\sim\pi_\theta}[\Delta_{j,i}e^{\beta \Delta_{j,i}}]}{\EE_{a_j\sim\pi_\theta}^2[e^{\beta \Delta_{j,i}}]} \\
    =& \frac{1}{\beta^2 \EE_{a_j\sim\pi_\theta}^2[e^{\beta \Delta_{j,i}}]}(\EE_{a_j\sim\pi_\theta}^2[e^{\beta \Delta_{j,i}}] - \EE_{a_j\sim\pi_\theta}[(1 + \beta \Delta_{j,i})e^{\beta \Delta_{j,i}}]).
\end{align*}
We first check $A'_{\beta,i}$ for $i\in\cI^*$. Since $\Delta_{j,i} \leq 0$ for any $j$, we have:
\begin{align*}
    &\EE_{a_j\sim\pi_\theta}^2[e^{\beta \Delta_{j,i}}] - \EE_{a_j\sim\pi_\theta}[(1 + \beta \Delta_{j,i})e^{\beta \Delta_{j,i}}]\\
    =& (\pi_{\cI^*} + \sum_{j:\Delta_{j,i} < 0} \pi_{j} e^{\beta \Delta_{j,i}})^2 - \sum_{j:\Delta_{j,i}<0}\pi_{j}(1 + \beta \Delta_{j,i})e^{\beta \Delta_{j,i}} - \pi_{\cI^*} \\
    \leq & \pi_{\cI^*}^2 + 2 \pi_{\cI^*} \sum_{j:\Delta_{j,i} < 0} \pi_{j} e^{\beta \Delta_{j,i}} - \pi_{\cI^*} + (\sum_{j:\Delta_{j,i} < 0} \pi_{j} e^{\beta \Delta_{j,i}})^2 + \beta \sum_{j:\Delta_{j,i}<0}\pi_{j} |\Delta_{j,i}|e^{\beta \Delta_{j,i}}.
\end{align*}
Now, we denote $\beta'$ to be the minimal value, s.t., for all $\beta > \beta'$,
\begin{align}
\sum_{j:\Delta_{j,i} < 0} \pi_{j} e^{\beta \Delta_{j,i}} \leq \frac{1 - \pi_{\cI^*}}{6},\label{eq:cond_beta_0}
\end{align}
and we denote $\beta''$ to be the minimal value, s.t., for all $\beta > \beta''$,
\begin{align}
    (\sum_{j:\Delta_{j,i} < 0} \pi_{j} e^{\beta \Delta_{j,i}})^2 + \beta\sum_{j:\Delta_{j,i}<0}\pi_{j} |\Delta_{j,i}|e^{\beta \Delta_{j,i}} \leq \frac{\pi_{\cI^*} - \pi_{\cI^*}^2}{3}.\label{eq:cond_beta_1}
\end{align}
Since the RHS of both Eq.~\eqref{eq:cond_beta_0} and Eq.~\eqref{eq:cond_beta_1} are independent w.r.t. $\beta$, such a $\beta'$ and $\beta''$ always exists.
Then, for any $\beta > \max\{\beta', \beta''\}$, we have:
\begin{align*}
    \EE_{a_j\sim\pi_\theta}^2[e^{\beta \Delta_{j,i}}] - \EE_{a_j\sim\pi_\theta}[(1 + \beta \Delta_{j,i})e^{\beta \Delta_{j,i}}] \leq \frac{\pi_{\cI^*}^2 - \pi_{\cI^*}}{3} < 0,
\end{align*}
which implies that, although $A_{\beta,i^*} > 0$, it decreases as $\beta$ increases.

Secondly, we check $A'_{\beta,i}$ for all the other $i \not\in \cI^*$. 
As we discussed in the proof of Lem.~\ref{lem:RS_Gaurantee}, when $\beta \geq \frac{1}{\Delta}\log\frac{1}{\pi_{\cI^*}}$, we have $A_{\beta,i} < 0$ for all $i \neq i^*$. 
Note that,
\begin{align*}
    &\EE_{a_j\sim\pi}^2[e^{\beta \Delta_{j,i}}] - \EE_{a_j\sim\pi_\theta}[(1 + \beta \Delta_{j,i})e^{\beta \Delta_{j,i}}]\\
    \geq &(\sum_{j:\Delta_{j,i} = 0} \pi_{j} +  \sum_{j:\Delta_{j,i} > 0} \pi_{j}e^{\beta \Delta_{j,i}})^2 \tag{Dropped positive terms $\sum_{j:\Delta_{j,i} < 0} \pi_{i} e^{\beta \Delta_{j,i}}$ in $\EE_{a_j\sim\pi}^2[e^{\beta \Delta_{j,i}}]$}\\
    & - \sum_{j:\Delta_{j,i} = 0} \pi_{j} - \sum_{j:\Delta_{j,i} > 0} \pi_{j}(1 + \beta \Delta_{j,i})e^{\beta \Delta_{j,i}} - \sum_{j:\Delta_{j,i} < 0} \pi_{j} (1 + \beta \Delta_{j,i})e^{\beta \Delta_{j,i}} \\
    \geq & \underbrace{(\sum_{j:\Delta_{j,i}=0}\pi_{i})^2 - \sum_{j:\Delta_{j,i}=0} \pi_{i}}_{p_1} + \underbrace{\sum_{j:\Delta_{j,i} > 0} \pi_{j}^2 e^{2\beta \Delta_{j,i}}  - \sum_{j:\Delta_{j,i} > 0} \pi_{j}(1 + \beta \Delta_{j,i})e^{\beta \Delta_{j,i}}}_{p_2} - \underbrace{\sum_{j:\Delta_{j,i} < 0} \pi_{j} e^{\beta \Delta_{j,i}}}_{p_3} \tag{$a^2 + b^2 \leq (a + b)^2$ for $a,b > 0$}.
\end{align*}
As we can see, $p_1$ is negative but fixed; for $p_3$, consider $\beta^{\dagger} := \max_{j:\Delta_{j,i} < 0}\frac{1}{|\Delta_{j,i}|}$, we know $0 < p_3 \leq 1$ as long as $\beta \geq \beta^{\dagger}$.
Then, we check $p_2$, which is dominated by $e^{2\beta \Delta_{j,i}}$. There exists $\beta^{\dagger\dagger}$, s.t., $p_2 > |p_1| + 1 > |p1| + p_3$ as long as $\beta \geq \max\{\beta^\dagger, \beta^{\dagger\dagger}\}$, which implies $A_{\beta,i}$ will stay negative but increasing when $\beta$ is large enough.

Combining all the discussion above, as long as $\beta \geq \bar{\beta} := \{\beta',\beta'',\beta^\dagger,\beta^{\dagger,\dagger}\}$, we have:
\begin{itemize}
    \item $\forall i\in\cI^*,\quad A'_{\beta,i} < 0$, therefore, $A_{\beta,i} > 0$ but decreases as $\beta$ increases;
    \item $\forall i\not\in\cI^*,\quad A'_{\beta,i} > 0$, therefore, $A_{\beta,i} < 0$ but increases as $\beta$ increases;
\end{itemize}
Combining with Eq.~\eqref{eq:next_step_policy}, we have $\tilde{\pi}^\beta_{\cI^*} - \pi_{\cI^*}$ is decreasing as $\beta$ increases when $\beta \geq \bar{\beta}$.
\end{proof}





s
\section{Additional Experiments}
\subsection{Entropy Analysis}
We investigate the connection between entropy changes and Risk-sensitive RL. As shown in Section~\ref{sec:beta_ablation}, a larger $\beta$ value typically leads to a higher solve rate on the training set and encourages stronger exploration. Figure~\ref{sec:beta_ablation} illustrates the entropy loss dynamics during training. Our findings indicate that while a correlation exists, entropy does not consistently increase with larger $\beta$ values. This suggests that entropy loss may be a biased indicator and might not fully capture the extent of exploration. Moreover, we observe a relationship between optimizing the risk-seeking objective and an increase in entropy, as evidenced by the lowest entropy levels occurring when $\beta = 0$.

\begin{figure}[H]
    \centering
    \begin{subfigure}{0.3\textwidth}
        \includegraphics[width=\textwidth]{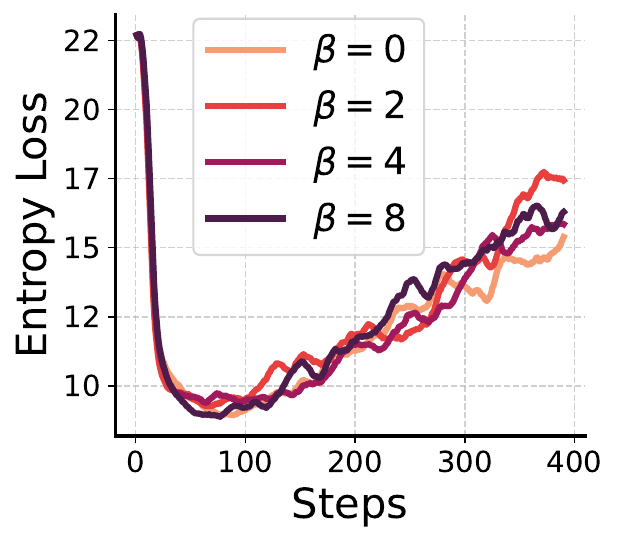}
        \caption{Qwen2.5-Math-1.5B}
        \label{fig:qwen1p5math_entropy}
    \end{subfigure}
    \begin{subfigure}{0.3\textwidth}
        \includegraphics[width=\textwidth]{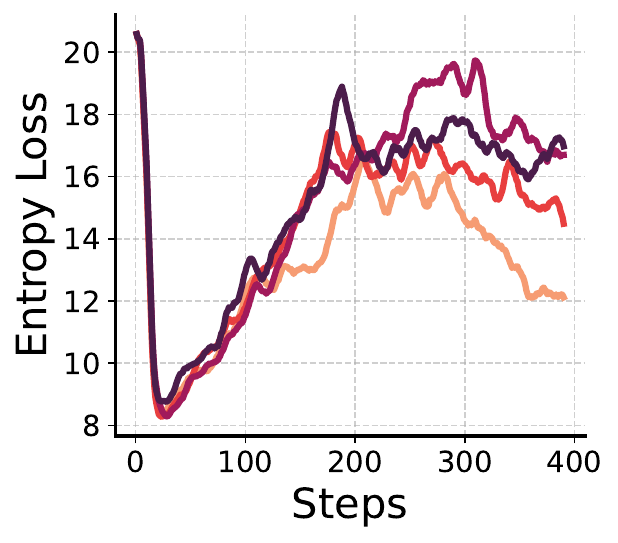}
        \caption{Qwen2.5-Math-7B}
        \label{fig:qwen7bmath_entropy}
    \end{subfigure} 
    \caption{Entropy Analysis under RS-GRPO Training with Different $\beta$ Values}
    \label{fig:entropy}
\end{figure}



\section{Implementation Details}
\label{sec:impl}
\paragraph{Datasets}
We trained our models using the following datasets from Hugging Face:
\begin{itemize}
    \item math12k~\citep{cobbe2021math500}: \texttt{hiyouga/math12k}
    \item dapo17k~\citep{yu2025dapo}: \texttt{BytedTsinghua-SIA/DAPO-Math-17k}
    \item deepmath103k~\citep{he2025deepmath}: \texttt{zwhe99/DeepMath-103K}
\end{itemize}

For evaluation, we used the following datasets, also from Hugging Face:
\begin{itemize}
    \item MATH500: \texttt{math-ai/math500}
    \item AIME24: \texttt{HuggingFaceH4/aime\_2024}
    \item AIME25: \texttt{math-ai/aime25}
    \item HMMT\_Feb24: \texttt{MathArena/hmmt\_feb\_2024}
    \item HMMT\_Feb25: \texttt{MathArena/hmmt\_feb\_2025}
    \item CMIMC25: \texttt{MathArena/cmimc\_2025}
\end{itemize}

\paragraph{Training Details}

Our implementation is based on the VeRL framework~\citep{sheng2024hybridflow}, and we utilize vLLM 0.8.5~\citep{kwon2023vllm} for our experiments. During reinforcement learning training, we do not apply KL regularization. The maximum response length (in tokens) varies by model: 3072 for Qwen2.5-Math-1.5B and Qwen2.5-Math-7B, and 8192 for Qwen2.5-7B, Qwen3-4B, and Llama-3.1-8B-Instruct. We use Math\_Verify\footnote{\url{https://github.com/huggingface/Math-Verify}} as the ground-truth reward model (reward = 1 for a correct answer, 0 otherwise). For every question, we append the string \texttt{\textbackslash{}nPlease reason step by step, and put your final answer within \textbackslash{}boxed\{\}} as the prompt.

Table~\ref{tab:hyperparameters} summarizes the hyperparameters employed in our experiments. All the experiments keep these identical. For the experiments in Fig.~\ref{fig:passk}, we set $\beta=8$ for RS-GRPO. For the comparison with other pass@k methods in Tab.~\ref{tab:passk_main_results}, we set $k=4$ for all pass@k methods and $\beta=2$ for RS-GRPO. This comparison is fair, as further discussed in Sec.~\ref{sec:appendix_baseline}.

\begin{table}[h]
    \centering
    \caption{\textbf{Hyperparameters used in our experiments During RL Training.}}
    \label{tab:hyperparameters}
    \begin{tabular}{l c}
    \toprule
    \textbf{Hyperparameter} & \textbf{Value} \\
    \midrule
    Temperature & 1.0 \\
    Top-p & 1.0 \\
    learning rate & $1\times10^{-6}$ \\
    Training prompt batch size & 512 \\
    Responses per prompt $N$ & 16 \\
    PPO epochs & 1 \\
    PPO mini-batch size & 32 \\
    PPO clip\_high & 0.28 \\
    PPO clip\_low & 0.2 \\
    Entropy loss coefficient & 0 \\
    KL coefficient & 0 \\
    \bottomrule
    \end{tabular}
\end{table}

\paragraph{Evaluation Details} 
The MATH500 benchmark contains 500 problems, while the other datasets consist of 30 or 40 problems each. During inference, we set the sampling temperature to 1.0 and use a top-p value of 0.7. For most benchmarks, we generate $N=1024$ candidate solutions per problem. However, for the larger MATH500 dataset, we use $N=32$ to ensure the evaluation remains computationally feasible. For the training curve metrics recording (like Figure~\ref{fig:bench} and~\ref{fig:qwen1p5_7_beta_ablation}), we set $N = 1$ for MATH500 and $N = 32$ for the other datasets. Thus, the testing pass@1 metric records the average across 6 benchmarks, and the testing pass@32 metric records the average across the 5 benchmarks excluding MATH500. We compute the pass@k metric using the unbiased estimator proposed in~\citep{chen2021evaluating}.

\begin{lstlisting}[language=Python,breaklines=true,mathescape=true,showstringspaces=false,basicstyle=\ttfamily\footnotesize]
def pass_at_k(n, c, k):
    """
    :param n: total number of samples
    :param c: number of correct samples
    :param k: k in pass@$k$
    """
    if n - c < k: return 1.0
    return 1.0 - np.prod(1.0 - k /
        np.arange(n - c + 1, n + 1))
\end{lstlisting}

\paragraph{Bandit Experiment Details}
In the experiments of Section~\ref{sec:why_rsrl_matters}, we consider a bandit setting with 100 actions, denoted as $\mathcal{A} = \{a_1, \dots, a_{100}\}$. We employ a softmax policy $\pi_\theta$ parameterized by $\theta \in \mathbb{R}^{100}$, where $\theta = [\theta_1, \dots, \theta_{100}]$. The probability of selecting action $a_i$ is given by:
\begin{align*}
\pi_{\theta}(a_i) = \frac{e^{\theta_{i}}}{\sum_{j=1}^{100} e^{\theta_{j}}}, \quad \text{for all } i \in [100].
\end{align*}

For each stochastic policy gradient update, we set the batch size $N=16$ and the learning rate to 0.1.

\end{document}